\tikzset{
    >=stealth',
    image/.style={
           rectangle,
		   fill=yellow!10,
           draw=black, very thick,
           text width=10em,
           minimum height=2em,
           text centered},
    process/.style={
           rectangle,
           rounded corners,
		   fill=blue!10,
           draw=black, very thick,
           text width=8em,
           minimum height=2em,
           text centered},
    object/.style={
           circle,
		   fill=yellow!10,
           draw=black, very thick,
           text width=2em,
           minimum height=1.5em,
           text centered},
    pil/.style={
           ->,
           thick,
           shorten <=2pt,
           shorten >=2pt,}
}
\DeclareMathOperator*{\argmin}{arg\,min\,}
\newcommand{\Hm}{\mathbf{H}} 
\newcommand{\y}{\mathbf{y}} 
\newcommand{\x}{\mathbf{x}} 
\newcommand{\n}{\mathbf{n}} 
\newcommand{\z}{\mathbf{z}} 
\newcommand{\vs}{\mathbf{v}} 
\newcommand{\dv}{\mathbf{d}} 
\newcommand{\D}{\mathbf{D}} 
\newcommand{\wt}{\mathbf{w}} 
\newcommand{\A}{\mathbf{A}} 
\newcommand{\B}{\mathbf{B}} 
\newcommand{\C}{\mathbf{C}} 
\newcommand{\e}{\mathbf{e}} 
\newcommand{\f}{\mathbf{f}} 
\newcommand{\M}{\mathbf{M}} 
\newcommand{\I}{\mathbf{I}} 
\newcommand{\G}{\mathbf{G}} 
\newcommand{\K}{\mathbf{K}} 
\newcommand{\T}{\mathbf{T}} 
\newcommand{\uu}{\mathbf{u}} 
\newcommand{\pp}{\mathbf{p}} 
\newcommand{\PP}{\mathbf{P}} 
\definecolor{light}{gray}{.9}
\newtheorem{theorem}{Theorem}
\newtheorem{lemma}{Lemma}
\newtheorem{corollary}{Corollary}
\begin{document}
%
\title{A Framework for Fast Image Deconvolution with Incomplete Observations}
%
%
%

\author{Miguel Sim\~{o}es, Luis B.\ Almeida, Jos\'{e} Bioucas-Dias,~\IEEEmembership{Member,~IEEE,} Jocelyn Chanussot,~\IEEEmembership{Fellow,~IEEE,}
\thanks{Miguel Sim\~{o}es is with Instituto de Telecomunica\c{c}\~{o}es, Instituto
Superior T\'{e}cnico, Universidade de Lisboa, Portugal, and GIPSA-Lab, Universit\'{e} de Grenoble, France.}
\thanks{Luis B. Almeida and Jos\'{e} Bioucas-Dias are with Instituto de Telecomunica\c{c}\~{o}es, Instituto
Superior T\'{e}cnico, Universidade de Lisboa, Portugal.}
\thanks{Jocelyn Chanussot is with GIPSA-Lab, Grenoble Institute of Technology, France and with the Faculty of Electrical and Computer Engineering, University of Iceland.}
\thanks{This work was partially supported by the Funda\c{c}\~{a}o para a Ci\^{e}ncia e Tecnologia, Portuguese Ministry of Science and Higher Education, project PEst-OE/EEI/0008/2013 and grant SFRH/BD/87693/2012.}}

%
%

\markboth{Journal of \LaTeX\ Class Files,~Vol.~11, No.~4, December~2012}%
{Shell \MakeLowercase{\textit{et al.}}: Bare Demo of IEEEtran.cls for Journals}
%



\maketitle

\begin{abstract}
In image deconvolution problems, the diagonalization of the underlying operators by means of the FFT usually yields very large speedups. When there are incomplete observations (e.g., in the case of unknown boundaries), standard deconvolution techniques normally involve non-diagonalizable operators, resulting in rather slow methods, or, otherwise, use inexact convolution models, resulting in the occurrence of artifacts in the enhanced images. In this paper, we propose a new deconvolution framework for images with incomplete observations that allows us to work with diagonalized convolution operators, and therefore is very fast. We iteratively alternate the estimation of the unknown pixels and of the deconvolved image, using, e.g., an FFT-based deconvolution method. This framework is an efficient, high-quality alternative to existing methods of dealing with the image boundaries, such as edge tapering. It can be used with any fast deconvolution method. We give an example in which a state-of-the-art method that assumes periodic boundary conditions is extended, through the use of this framework, to unknown boundary conditions. Furthermore, we propose a specific implementation of this framework, based on the alternating direction method of multipliers (ADMM). We provide a proof of convergence for the resulting algorithm, which can be seen as a ``partial'' ADMM, in which not all variables are dualized. We report experimental comparisons with other primal-dual methods, where the proposed one performed at the level of the state of the art. Four different kinds of applications were tested in the experiments: deconvolution, deconvolution with inpainting, superresolution, and demosaicing, all with unknown boundaries. 
\end{abstract}

\begin{IEEEkeywords}
Deconvolution, incomplete observations, convex non-smooth optimization, alternating direction method of multipliers (ADMM), primal-dual optimization, inpainting, superresolution, demosaicing.
\end{IEEEkeywords}

%
\IEEEpeerreviewmaketitle

\section{Introduction}
\label{sec:introduction}
\IEEEPARstart{D}{eblurring} is one of the classical problems of the image processing field. It consists in the recovery of sharp images from blurred ones, where the blur can be any kind of degradation that results in a decrease of image sharpness. The blur can be caused, for example, by camera motion, or by the propagation of light through the atmosphere. For a review of deblurring methods, see~\cite{campisi2007} and the more recent~\cite{Rajagopalan2014}. In this work, we consider the situation in which some pixels of the blurred image are not observed. Examples are the pixels corresponding to the boundaries of the image, saturated or missing pixels, and the extra pixels that would have been observed if the sensors that acquired the images had a higher spatial resolution than the actual sensors that were used, as in the case of superresolution~\cite{Milanfar2010} and demosaicing~\cite{Lukac2008}. We assume that the blur corresponds to a convolution with a known \emph{point spread function} (PSF). Since the blurring PSF is assumed to be known, this corresponds to the so-called non-blind deconvolution problem, but is also of use in blind deconvolution algorithms, since most of these involve non-blind deconvolution as one of the processing steps.

In this paper, bold lowercase letters denote column vectors, and bold uppercase letters denote matrices. $\I_p$ denotes the identity matrix of size $p \times p$. $\mathbf 1_p$ denotes a vector of ones of size $p$, and $\mathbf 0$ denotes a zero vector or matrix of appropriate size. A superscript on a vector, as in $\x^i$, denotes the index of a sequence of vectors. We will use the notation $\{\x^i\}$ as a shorthand for representing the sequence $\{\x^i\}_{i=1}^{+\infty}$. The subgradient operator will be denoted by $\partial$. The inverse of matrix $\A$ will be denoted by $\A^{-1}$ and the $\A$-norm will be denoted by $\| \cdot \|_{\A}$, i.e., $\|\x\|_{\A} = \sqrt{\x^T \A \x}$, where $\A$ is positive-definite and $\x$ is a vector.

\subsection{Problem statement}
\label{sec:problemstatement}

Consider the estimation of a sharp image from a blurred one. Assume that the support of the convolution kernel has size $(2b+1) \times (2b+1)$ pixels and is centered at the origin.\footnote{For simplicity, we consider only kernels supported in square regions centered at the origin. The extension to other situations would be straightforward.} Let the size of the blurred image be $m \times n$. To express that image as a function of the sharp one, we need to consider a region of the sharp image of size $m' \times n'$, with $m'=m+2b$ and $n'=n+2b$; the central $m \times n$ zone of this sharp image is in the same spatial location as the blurred image. Fig.~\ref{fig:scheme} illustrates this situation.

\begin{figure}[tbh!]
	\begin{center}
	\includegraphics[trim={0 10pt 0 10pt},clip,scale=.18]{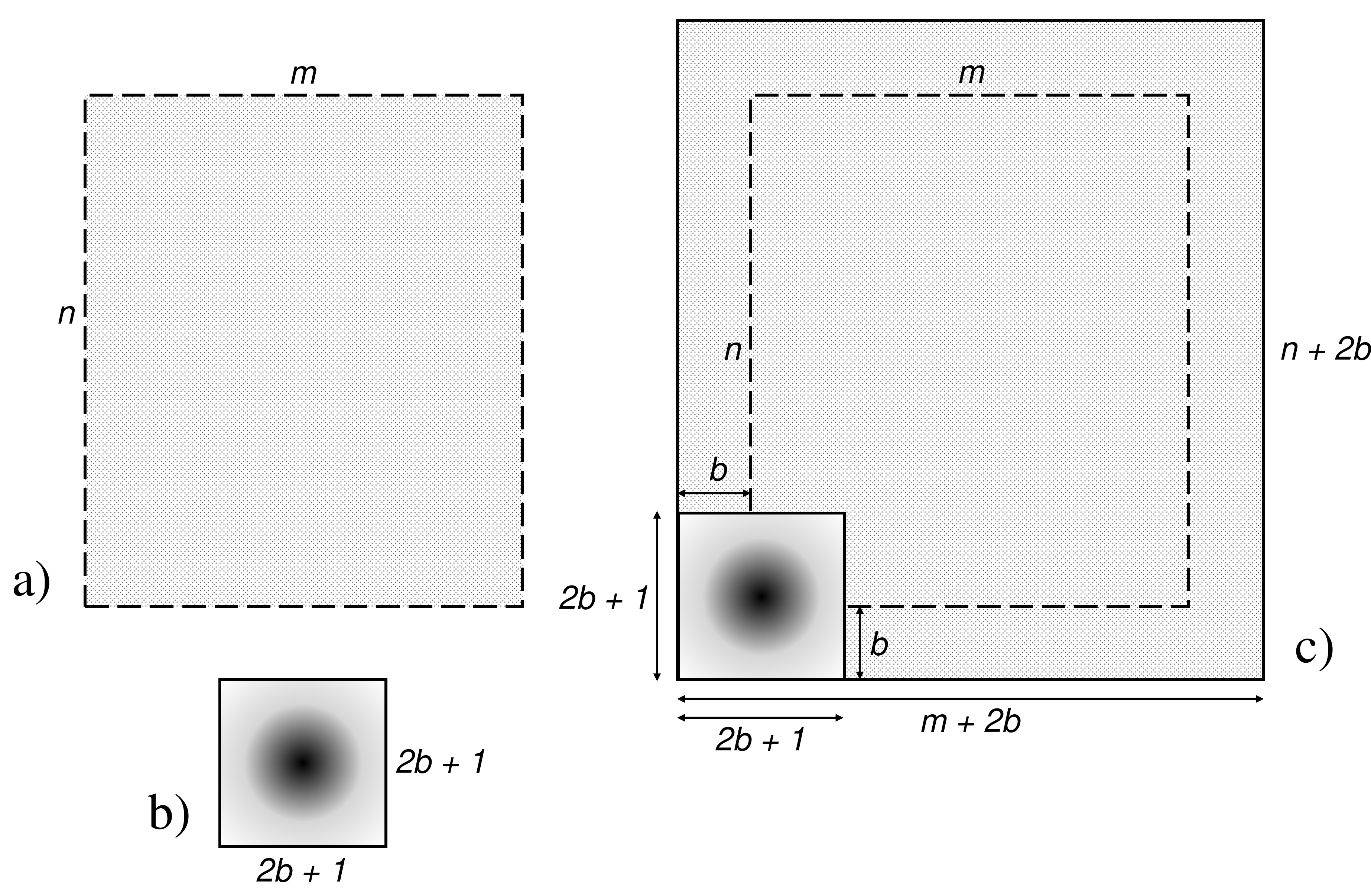}%
	\vspace{-10pt}
	\caption{An illustration of the dimensions of the images involved in typical deblurring problems. a) Blurred image. b) Blurring kernel. c) Sharp image.}
	\label{fig:scheme}
	\vspace{-15pt}
	\end{center}	
\end{figure}

Let $\mathcal{X}$ be a given nonempty convex subset of $\mathbb{R}^{m'n'}$. Assuming a linear observation model with additive noise, we can express the blurring operation as
\begin{equation} \label{model1}
	\y = \T \x + \n,
\end{equation}
in which the images are represented by column vectors with the pixels arranged in lexicographic order, $\y \in \mathbb R^{mn}$ is the observed image,  $\x \in \mathcal{X}$ is the sharp image, $\T$ is a $mn \times m'n'$ block-Toeplitz-Toeplitz-block (BTTB) matrix such that $\T \x$ represents the convolution of the sharp image with the blurring PSF, and $\n \in \mathbb R^{mn}$ is the observation noise. 

In this paper, we address the estimation of the sharp image $\x$ from the blurred one, in situations in which some of the pixels of the blurred image are unobserved. This situation occurs, for example, in inpainting and superresolution problems. Demosaicing is a form of superresolution, and also involves unobserved pixels. Often, in simple deconvolution problems, one is only interested in estimating the central $m \times n$ region of $\x$, which we shall designate by \emph{cropped sharp image}, denoted by $\bar \x$. If one is interested in estimating the whole image $\x$, this can be seen as an inpainting problem, since we are estimating the pixels of $\x$ in a boundary zone of width $b$ around the central $m \times n$ region, and this zone is not present in $\y$. Even if we do not wish to estimate this zone, the need to properly handle the boundary zone still exists in most real-life deconvolution methods, as we discuss next.

A difficulty with the use of model \eqref{model1} is that deconvolution methods based on it normally involve products by large BTTB matrices and/or the inversion of such matrices, and both operations are computationally heavy. In order to obtain fast deconvolution methods, many authors replace \eqref{model1} with models that involve simpler computations. One of the most frequently used models is
\begin{equation} \label{model2}
	\y = \overline \T \bar \x' + \n,
\end{equation}
in which $\overline \T$ is a block-circulant-circulant-block (BCCB) matrix of size $mn \times mn$, and $\bar \x' \in \mathbb R^{mn}$ is an approximation of the true cropped sharp image $\bar \x$. $\overline \T \bar \x'$ represents the circular convolution of $\bar \x'$ with the blurring PSF. The speed advantage of \eqref{model2} comes from the fact that BCCB matrices are diagonalized by the two-dimensional discrete Fourier transform (DFT), and therefore products and inverses involving such matrices can be efficiently computed using the FFT. However, the fact that $\bar \x'$ is only an approximation of the true image $\bar \x$ means that \eqref{model2} is not an exact model of the convolution process. As a consequence of this, the sharp images obtained by these methods normally exhibit artifacts, typically in the form of ringing. The use of model \eqref{model2} is often referred to as the use of \emph{periodic boundary conditions}, because it is equivalent to the use of \eqref{model1} with the true sharp image $\x$ replaced with an image obtained by periodically repeating $\bar \x$ in the horizontal and vertical directions, and then retaining only the central $m' \times n'$ region of the resulting periodic image. Other possibilities exist for obtaining fast deconvolution methods, besides the use of periodic boundary conditions. For example, one can use \emph{reflexive} or \emph{anti-reflexive} boundary conditions, which, under appropriate assumptions, lead to matrices that are diagonalizable, respectively, by the 2D discrete cosine transform and the 2D discrete sine transform, and therefore also yield significant speed advantages~\cite{Ng1999, Hansen2006, Donatelli2010}. However, the use of any of these boundary conditions (and, in fact, the use of any artificially imposed boundary conditions) corresponds to the use of an inexact convolution model, and therefore gives rise to artifacts.

The occurrence of artifacts can be completely eliminated by the use of an exact model of the convolution process. 
A relatively recent method, which we shall designate by AM \cite{Almeida2013a,Matakos2013}, uses a model of the form
\begin{equation} \label{model3}
	\y = \M \widetilde \T \x + \n,
\end{equation}
in which $\widetilde \T$ is an $m'n' \times m'n'$ BCCB matrix that corresponds to a circular convolution with the blurring PSF, and $\M$ is an $mn \times m'n'$ masking matrix that selects, from $\widetilde \T \x$, only the pixels that correspond to the observed image, discarding a boundary zone of width $b$ in the periphery of the image $\widetilde \T \x$. The circular convolution $\widetilde \T \x$ in \eqref{model3} only differs from the linear (i.e., non-circular) convolution $\T \x$ of \eqref{model1} by the presence of that boundary zone, and therefore Eq.~\eqref{model2} is an exact model of the convolution process. Computationally, this method has the advantage of using a diagonalizable matrix, $\widetilde \T$, but needs to deal with the fact that $\M \widetilde \T$ is not easily diagonalizable. This difficulty is circumvented, in AM, through an adaptation of the Alternating Directions Method of Multipliers (ADMM)~\cite{Afonso2011, Boyd2011}. By means of the splitting of a variable, AM decouples the matrix $\M$, which is diagonalizable in the spatial domain, from $\widetilde \T$, which is diagonalizable in the frequency domain, thereby allowing a significant speedup to be achieved.

The deconvolution framework proposed in the present work uses a different, but also exact, convolution model. The convolution process is modeled as
\begin{equation} \label{model4}
	\tilde \y = \widetilde \T \x + \n,
\end{equation}
in which $\widetilde \T$ is the same BCCB matrix as in \eqref{model3}, and $\tilde \y \in \mathbb R^{m'n'}$ represents the observed image $\y$ surrounded by the boundary region of width $b$ mentioned in the previous paragraph. In the proposed framework, this boundary region is estimated, instead of being masked out, as happened in AM.  We will present two implementations of this framework: an implementation using an off-the shelf deconvolution method that assumes circular boundary conditions, and an efficient implementation using a ``partial'' ADMM in which not all variables are dualized, and for which we present a convergence theorem.

Deconvolution methods that do not impose boundary conditions, such as AM and the method proposed in this paper, are often referred to as methods that use \emph{unknown boundaries}. We will use that nomenclature in this paper.


\subsection{Related work}

As mentioned above, many of the published deconvolution methods, such as~\cite{Ng1999, Hansen2006, Donatelli2010, Danielyan2012}, use artificially imposed boundary conditions, leading to the occurrence of artifacts. Some methods try to reduce the intensity of those artifacts by preprocessing the observed image's borders, e.g., through edge tapering or smoothing. A brief review of preprocessing methods is given in~\cite{Reeves2005}. Some other methods, such as~\cite{Jia2008, Fan2011}, consider the blurred image as being extended with a boundary zone as in~\eqref{model4}, but, instead of correctly estimating that zone, use a synthetic extension that is relatively easy to compute. Since the contents of that extension are artificially imposed, these methods give rise to artifacts.

Methods that use exact models have the potential to completely eliminate the occurrence of artifacts. One such method is the one proposed in~\cite{Reeves2005}, which implicitly uses model \eqref{model4}, although that model is not explicitly mentioned in the paper. That method is limited to the use of quadratic regularizers, which allow a fast implementation but yield relatively low-quality deconvolution results. The method has been extended in~\cite{Sorel2012}, allowing the use of more general regularizers. In the latter form, the method achieves a relatively high speed by limiting the estimation of the boundary zone (an estimation that is time-consuming, in that method's formulation) to a few initial iterations of the optimization procedure. This version of the method improves on the results of~\cite{Reeves2005} due to the use of more appropriate regularizers, but the imperfect estimation of the boundary zone gives rise to artifacts in the deblurred images. Similar approaches for astronomical images can be found in~\cite{Bertero2005, Vio2005}. Another method, proposed in~\cite{Chan2004}, uses model \eqref{model1}, and is rather slow due to the use of non-diagonalizable BTTB matrices.

As previously said, the AM method~\cite{Almeida2013a,Matakos2013} uses model \eqref{model3} and is based on ADMM. Compared to other algorithms, the use of ADMM that is made in that method has some convenient properties: its convergence is guaranteed under rather general conditions, and each iteration only involves operations with a low computational cost (e.g., the inversion of diagonalizable matrices). ADMM is a primal-dual method, which means that it solves a primal optimization problem as well as its dual convex formulation, in the sense of the Fenchel-Rockafellar duality theory (see~\cite{Combettes2009, Parikh2013, Komodakis2014} for recent reviews of proximal and primal-dual methods). In Section~\ref{sec:exp}, we experimentally compare AM with the ADMM-based method proposed in the present paper, and find the performances of both to be similar.

The works \cite{Condat2013} and \cite{Combettes2014} propose primal-dual methods that do not involve the inversion of large BTTB matrices. In Section~\ref{sec:exp}, we experimentally test one of these methods~\cite{Condat2013}, and find it to be rather slow. Another primal-dual method that considers non-periodic boundaries was proposed in~\cite{OConnor2014}. It needs the structure of the boundaries to be known \textit{a priori}, which normally is not the case in real-life situations. The use of an artificially chosen structure results, once again, in the occurrence of artifacts.

In \cite{Ferreira2010}, a method that has some resemblance to our proposed deconvolution framework was introduced, in the context of the solution of systems of linear equations with Toeplitz system matrices. 


\subsection{Contributions and outline}

This work has two main contributions:
\begin{itemize}
	\item We propose a new framework for solving deconvolution problems with unobserved pixels. This framework is an efficient, high-quality alternative to the use of heuristic methods, such as edge tapering, to reduce the artifacts produced by deconvolution methods that assume periodic boundary conditions.  We give an example of how this framework can be used to extend a state-of-the art deconvolution method to the use of unknown boundaries.
  
   \item  The proposed framework can also be used to develop new deconvolution methods. We propose a specific ADMM-based implementation of the framework, for which we give a proof of convergence. We experimentally compare it with some state-of-the-art methods.
\end{itemize}

The structure of this paper is as follows. Section~\ref{sec:method} describes the proposed framework and the ADMM-based implementation, and gives the convergence proof for the latter. Section~\ref{sec:exp} presents experimental results, and has two parts: the first one illustrates the use of the proposed framework to convert an off-the-shelf, state-of-the-art deblurring method (IDD-BM3D~\cite{Danielyan2012}) to the use of unknown boundaries; the second part presents a comparison between the proposed ADMM-based method and some other published ones, in problems of deconvolution with and without inpainting, of superresolution, and of demosaicing, all with unknown boundaries. Section~\ref{sec:conclusion} concludes.

\section{The proposed framework} \label{sec:method}
\label{sec:framework}

\subsection{Basic structure}
\label{sec:basicstructure}

As previously mentioned, the framework that we propose is based on model~\eqref{model4}. From here on, we'll express the extended blurred image $\tilde \y$ in a form that is more convenient for the treatment that follows, and that encompasses not only the case of unknown boundaries, but also all the other cases of unobserved pixels. We'll denote the number of observed pixels of the blurred image by $k$, and the number of unobserved pixels (including the above-mentioned boundary zone) by $d$. Let $\z \in \mathcal Z$ and $\y \in \mathbb R^k$ be  column vectors containing, respectively, the elements of $\tilde \y$ that correspond to unobserved pixels and those that correspond to observed pixels; $\mathcal{Z}$ is some given nonempty convex subset of $\mathbb{R}^{d}$. In a simple deconvolution problem with unknown boundaries, $\z$ will contain the boundary zone, and $\y$ will contain the observed blurred image. In a combined deconvolution and inpainting problem, $\z$ will contain both the boundary zone and the additional unobserved pixels, and $\y$ will contain the pixels of the blurred image that were actually observed. We will reorder the elements of the extended image $\tilde \y$ as $\left[\begin{smallmatrix} \y \\ \z \end{smallmatrix}\right] = \PP \tilde \y$, where $\PP$ is an appropriate permutation matrix, so that the observed pixels are in the first positions and the unobserved pixels are in the last positions of the vector $\left[\begin{smallmatrix} \y \\ \z \end{smallmatrix}\right]$.

Conceptually, the proposed framework is rather simple. It consists of using the blurring model~\eqref{model4}, and alternately estimating $\x$ and $\z$, as shown in Fig.~\ref{alg:image_estimation}. In this framework, step 2, which estimates $\x$, can be performed, essentially, with any existing deblurring method that assumes circular boundary conditions.%
\footnote{The framework can also be used with methods that use other boundary conditions. The only difference will be in the structure of the matrix $\widetilde \T$ used in model~\eqref{model4}. Instead of being a BCCB matrix, it will have the proper structure for the boundary conditions under consideration.}
Step 3 is performed by just computing the reconstructed blurred image, given by $\widetilde \T \x$, and selecting from it the pixels that correspond to $\z$.

An important difference of the proposed framework relative to most published deblurring methods, including AM, is that, in this framework, the unobserved pixels of the blurred image (represented by $\z$) are explicitly estimated. This means that we have one more variable to estimate ($\z$).

\looseness -1 As given in Fig.~\ref{alg:image_estimation}, the proposed framework is rather general. It can be used to design new deblurring methods, an example of which is the efficient ADMM-based method that we propose in Section~\ref{sec:partialadmm}. It is also an efficient, high-quality alternative to methods such as edge tapering, to convert existing deblurring methods that impose specific boundary conditions into methods that work with unknown boundaries. We illustrate this in Section~\ref{iddbm3d}, by using the proposed framework to convert an off-the-shelf, state-of-the-art deblurring method that assumes circular boundary conditions (IDD-BM3D) into a method that uses unknown boundaries.

\begin{figure}[tb]
	\begin{center}
		\colorbox{light}{\parbox{1.0\columnwidth}{
				\begin{algorithmic}
					\STATE{\centering \textbf{Framework 1} \nonumber \\}
					\STATE{
						{\small 1.} Initialize $\x^1$ and $\z^1$. Let $i=1$.}
					\REPEAT
					\STATE{
						{\small 2.} Compute $\x^{i+1}$ given $\x^{i}, \, \z^{i}$.}
					\STATE{
						{\small 3.} Compute $\z^{i+1}$ given $\x^{i+1}$.}
					\STATE{
						{\small 4.} Increment $i$.}
					\UNTIL{stopping criterion is satisfied.}
				\end{algorithmic}
			}}
		\caption{The proposed deblurring framework.}
		\label{alg:image_estimation}
	\end{center}
	\vspace{-15pt}	
\end{figure}

\subsection{ADMM-based method}
\label{sec:partialadmm}

\looseness -1 ADMM has been widely used, in recent years, to solve high-dimensional problems in signal and image processing, due to its ability to yield high-quality solutions in a computationally efficient way, in many practical situations. In its original formulation, ADMM can be used to solve problems of the form
\begin{equation*}
\underset{\uu}{\text{minimize}} \quad f(\uu) + \psi(\K \uu),
\end{equation*}
where both $f$ and $\psi$ are closed, proper convex, possibly non-smooth functions (see Appendix~\ref{sec:app_admm} for details). In comparison with some other primal-dual methods, in the context of image deconvolution, ADMM uses an extra set of variables and involves a matrix inversion. These characteristics, however, apparently are the ones that allow it to remain quite competitive relative to more recent primal-dual methods, as illustrated, for example, by the results that we report in Sections~\ref{sec:deblurring} to \ref{sec:appraisal}.

In the ADMM-based implementation of Framework 1 that we propose, we'll use the blurring model of Eq.~\eqref{model4}. The noise $\n$ will be assumed to be i.i.d. Gaussian. We'll use a maximum-a-posteriori (MAP) formulation, and consequently, the data-fitting term of our objective function will be given by
\begin{equation} \label{eq:new_model}
f(\x,\z) = \frac{1}{2} \Bigg\|\begin{bmatrix}
\y \\
\z
\end{bmatrix} - \Hm \x \Bigg\|^2,	
\end{equation}
with $\Hm = \PP \widetilde \T$ and $\x \in \mathcal X$, where $\mathcal X$ is some given convex subset of $\mathbb{R}^{k+d}$. 

The problem to be solved will be expressed as
\begin{equation} \label{eq:problem_xz}
\begin{aligned}
& \underset{\x, \z}{\text{minimize}}
& & f(\x, \z) + \phi(\D \x),
\end{aligned}
\end{equation}
where $\D \in \mathbb R^{l \times (k+d)}$ is a matrix that extracts a linear representation of the estimated image, such as edges, $l$ is the number of components of that representation, and $\phi(\D \x)$ is a regularizer that promotes some desirable characteristic of images, such as sharp edges.

%

We'll start by considering the use of ADMM in its standard form to solve problem~(\ref{eq:problem_xz}). The resulting method will not be very efficient, because it will involve a step that is computationally heavy, but it will be useful to motivate the method that we propose, and to analyze some of its properties. We'll then describe our proposed method, which avoids the above-mentioned computational inefficiency through the use of a \emph{partial} ADMM.%
\footnote{We qualify it as ``partial'' because not all variables are dualized.}

In what follows, we will make use of the variables $\vs, \dv \in \mathbb R^{l+d}$, decomposed as $\vs = \big[ \begin{smallmatrix} \vs_x \\ \vs_z \end{smallmatrix} \big]$ and $\dv = \left[\begin{smallmatrix}\dv_x\\ \dv_z \end{smallmatrix}\right]$, with $\vs_x, \dv_x \in \mathbb R^l$ and $\vs_z, \dv_z \in \mathbb R^d$.  To apply ADMM to problem~(\ref{eq:problem_xz}), we first define
\[
\begin{aligned}
\uu &= \begin{bmatrix}
  \x \\
  \z
\end{bmatrix},
&
\K &= \begin{bmatrix}
  \D & \mathbf{0} \\
  \mathbf{0} & \I_{d}
\end{bmatrix},
&
&\bar f(\uu) &= f(\x,\z),
\end{aligned}
\]
and also define
\begin{equation*}
	\psi(\vs) = \phi(\vs_x),
\end{equation*}
so that $\psi(\K \uu) = \phi(\D \x)$.

We rewrite~\eqref{eq:problem_xz} as
\begin{equation} \label{eq:problem_u}
\begin{aligned}
& \underset{\uu, \vs}{\text{minimize}}	& & \bar f(\uu) + \psi(\vs) \\
&\text{subject to}											& & \vs = \K \uu.
\end{aligned}
\end{equation}
\noindent Applying ADMM to this problem, we obtain an iteration of the sequence of steps
\begin{align}
\left[\begin{matrix} \x^{i+1} \\ \z^{i+1} \end{matrix} \right] &\in \underset{\x, \z}\argmin \;  
f(\x, \z) + \frac{\mu}{2} \Bigg\| \vs^i - \K \begin{bmatrix} \x \\ \z \end{bmatrix} - \dv^i \Bigg\|^2, \label{eq:admm_xz_1} \\
\vs^{i+1} &\in \underset{\vs}\argmin \;
\psi (\vs) + \frac{\mu}{2} \Bigg\| \vs - \K \begin{bmatrix} \x^{i+1} \\ \z^{i+1} \end{bmatrix} - \dv^i \Bigg\|^2, \label{eq:admm_xz_2} \\
\dv^{i+1} &= \dv^i - \Bigg(\vs^{i+1} - \K \begin{bmatrix} \x^{i+1} \\ \z^{i+1} \end{bmatrix}\Bigg) \label{eq:admm_xz_3}.
\end{align}
From now on, we will refer to the iteration of steps~\eqref{eq:admm_xz_1}--\eqref{eq:admm_xz_3} as the \textit{standard ADMM}.

As mentioned above, the standard ADMM won't normally be computationally efficient. This is due to the fact that, in step~(\ref{eq:admm_xz_1}), $\x$ and $\z$ need to be estimated simultaneously, and this will normally involve the inversion of a large matrix that is not easily diagonalizable. For the (rather small) images used in the experimental tests of \mbox{Section~\ref{sec:exp}}, which have $256 \times 256$ pixels, this would involve a matrix with $(256 \times 256)^2 \approx 4 \times 10^9$ elements. Since matrix inversion runs in $O[n^3]$ time, directly inverting matrices as large as these would not be feasible, in useful time, with current computers. Furthermore, it would be impracticable to manipulate such large matrices, given the memory sizes of most present-day computers. For the much larger images that are commonly used in practice, the difficulties would be even larger.

To motivate the solution that we propose, we note that if, in problem~\eqref{eq:problem_xz}, we consider minimizing relative to $\x$ and to $\z$ separately, only the minimization relative to $\x$ will be difficult to perform. The minimization relative to $\z$ will be easy to implement in a computationally efficient way, because it is the minimization of a quadratic function, and the matrix $\Hm = \PP \widetilde \T$ is diagonalizable in the frequency domain (with an appropriate permutation, corresponding to the product by $\PP$). In view of this, we will separate the minimization relative to $\x$ from the minimization relative to $\z$, applying them in an alternating manner, and we will apply the ADMM machinery only to the variable $\x$, instead of applying it to $\big[ \begin{smallmatrix} \x \\ \z \end{smallmatrix} \big]$, as happened in the standard ADMM. Of course, the convergence guarantees of the standard ADMM won't apply to the proposed method. We will, therefore, present a convergence proof for it.

Since we are applying the ADMM machinery only to $\x$, step~\eqref{eq:admm_xz_1} of the standard ADMM will be replaced by a minimization of
\begin{equation} \label{eq:admm_xz_1a}  
	f( \x, \z) + \frac{\mu}{2} \Big\| \vs_x^i - \D \x -  \dv_x^i \Big\|^2,
\end{equation}
which we will solve approximately by means of an alternating minimization on $\x$ and $\z$ through one or more block-Gauss-Seidel (BGS) passes. Furthermore, steps~\eqref{eq:admm_xz_2} and \eqref{eq:admm_xz_3} will have to be modified so as to refer only to $\x$, and not to $\left[\begin{smallmatrix}\x\\ \z \end{smallmatrix}\right]$. If we use just one BGS pass to minimize~\eqref{eq:admm_xz_1a}, the complete method will correspond to the iteration of
\begin{align}
\x^{i+1} &\in \underset{\x}\argmin \; f( \x, \z^i) + \frac{\mu}{2} \Big\| \vs_x^i - \D \x -  \dv_x^i \Big\|^2, \label{eq:admm_aprox_1} \\
\z^{i+1} &\in \underset{\z}\argmin \; f( \x^{i+1}, \z), \label{eq:admm_aprox_2} \\
\vs_x^{i+1} &\in \underset{\vs_x}\argmin \; \phi(\vs_x) + \frac{\mu}{2} \Big\| \vs_x - \D \x^{i+1} -  \dv_x^i \Big\|^2,  \label{eq:admm_aprox_3} \\
\dv_x^{i+1} &= \dv_x^i - (\vs_x^{i+1} - \D\x^{i+1}). \label{eq:admm_aprox_4}
\end{align}
If we use more BGS passes, instead of just one, there will be an inner loop consisting of steps \eqref{eq:admm_aprox_1} and \eqref{eq:admm_aprox_2}. 

As can easily be seen, this method falls within the scope of Framework 1, the main steps being \eqref{eq:admm_aprox_1} and \eqref{eq:admm_aprox_2}; steps \eqref{eq:admm_aprox_3} and \eqref{eq:admm_aprox_4} are added by the use of the ADMM technique. We will call the iteration of \eqref{eq:admm_aprox_1}--\eqref{eq:admm_aprox_4} (with one or more BGS passes) the \textit{partial ADMM}; this designation stems from the fact that we only apply the ADMM technique to $\x$, and not to $\left[\begin{smallmatrix}\x\\ \z \end{smallmatrix}\right]$.

We will now address the issue of the convergence of the partial ADMM. We will start by proving (in Theorem~\ref{th:main}) the convergence of a somewhat more general method, and we will then show (in Corollary~\ref{th:corollary}) that the partial ADMM is a special case of that method, and is therefore encompassed by Theorem~\ref{th:main}.

Until now, we have assumed the data-fitting term $f$ to be given by~\eqref{eq:new_model}. For the proof of convergence, we will allow $f$ to have the more general form
\begin{equation} \label{eq:quadratic_f}
f(\x, \z) = \frac{1}{2} \begin{bmatrix} \x \\ \z \end{bmatrix}^T \begin{bmatrix} \A & \B \\ \B^T & \C \end{bmatrix} \begin{bmatrix} \x \\ \z \end{bmatrix} + \begin{bmatrix} \x \\ \z \end{bmatrix}^T \begin{bmatrix} \e \\ \f \end{bmatrix} + g,
\end{equation}
where $\A \in \mathbb{R}^{(k+d) \times (k+d)}$, $\B \in \mathbb{R}^{(k+d) \times d}$, $\C \in \mathbb{R}^{d \times d}$, $\e \in \mathbb{R}^{k+d}$, $\f \in \mathbb{R}^{d}$, and $g \in \mathbb{R}$, and where we assume that $\C$ is positive-definite (PD) and that $\A - \B \C^{-1} \B^T$ is positive-semidefinite (PSD). These assumptions guarantee that $f$ is convex, and are not very restrictive. The set of functions that they encompass is only slightly less general than the set of all convex quadratic functions. To obtain the latter set, the assumption on $\C$ would have to be relaxed to being PSD, but additional assumptions would need to be made (see, e.g., Appendix A.5.5 of~\cite{Boyd2004}).

The convergence result is given by the following theorem:

\begin{theorem} \label{th:main}
	Assume that, in problem~\eqref{eq:problem_xz} with $f$ defined by~\eqref{eq:quadratic_f}, $\C$ is PD, $\A - \B \C^{-1} \B^T$ is PSD, $\D$ is full column rank, and $\phi$ is closed proper convex and coercive. Define $\K = \left[\begin{smallmatrix} \D & \mathbf{0} \\ \mathbf{0} & \I_{d} \end{smallmatrix} \right]$. Then, the set of solutions of problem~\eqref{eq:problem_xz} is non-empty, the sequence $\big\{\big[\begin{smallmatrix} \x^i \\ \z^i \end{smallmatrix}\big]\big\}$ generated by the partial ADMM converges to a solution of that problem, and the sequence $\{\vs^i\}$ converges to $\K \left[\begin{smallmatrix} \x^* \\ \z^* \end{smallmatrix}\right]$, where $\left[\begin{smallmatrix} \x^* \\ \z^* \end{smallmatrix}\right]$ is the limit of $\big\{\big[\begin{smallmatrix} \x^i \\ \z^i \end{smallmatrix}\big]\big\}$.
\end{theorem}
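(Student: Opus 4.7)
My approach is to eliminate $\z$ using the positive-definite structure of $\C$, reducing the problem to one in $\x$ alone, and then to recognize the partial ADMM as a \emph{proximal} (linearized) ADMM on this reduced problem, whose convergence follows from established theory. For the reduction: since $\C \succ 0$, the $\z$-update of the partial ADMM has the closed form $\z^{i+1} = -\C^{-1}(\B^T \x^{i+1} + \f)$, and substituting into $f$ gives the convex quadratic
\[
    \tilde f(\x) := \min_{\z} f(\x,\z) = \tfrac{1}{2}\x^T(\A - \B\C^{-1}\B^T)\x + \x^T(\e - \B\C^{-1}\f) + c,
\]
whose Hessian is PSD by assumption, so the original problem is equivalent to minimizing $\tilde f(\x) + \phi(\D\x)$ over $\x$. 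Coercivity of $\phi$ plus full column rank of $\D$ makes $\phi\circ\D$ coercive in $\x$, so the reduced (and hence the original) problem admits a minimizer.

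Next, using $\z^i = -\C^{-1}(\B^T\x^i + \f)$, inherited from the previous iteration's $\z$-update, the first-order optimality condition of the partial-ADMM $\x$-update rearranges into
\[
    \nabla \tilde f(\x^{i+1}) + \G(\x^{i+1} - \x^i) + \mu \D^T(\D\x^{i+1} + \dv_x^i - \vs_x^i) = \mathbf{0},
\]
where $\G := \B\C^{-1}\B^T \succeq \mathbf{0}$. This is precisely the optimality condition for the proximal-ADMM step obtained by adding $\tfrac{1}{2}\|\x - \x^i\|_{\G}^2$ to the standard ADMM subproblem for $\min_{\x} \tilde f(\x) + \phi(\D\x)$; the $\vs_x$- and $\dv_x$-updates coincide with the standard ADMM updates for this reduced problem, while $\vs_z^i = \z^i$ and $\dv_z^i = \mathbf{0}$ are trivial consequences of the $\vs_z$-block being uncoupled in the objective.

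Invoking the convergence theory for proximal (generalized) ADMM with a PSD proximal term, under the present hypotheses---closed proper convex $\phi$, convex $\tilde f$, existence of a saddle point, $\G \succeq \mathbf{0}$---yields convergence of $\{\dv_x^i\}$, $\{\vs_x^i\}$, and $\{\D\x^i\}$ to a primal-dual optimal triple of the reduced problem. The full column rank of $\D$ then upgrades $\D\x^i \to \D\x^*$ to $\x^i \to \x^*$, and continuity yields $\z^i \to \z^* := -\C^{-1}(\B^T\x^* + \f)$. The KKT conditions for the reduced problem translate directly into those for the original problem, making $(\x^*,\z^*)$ optimal and giving $\vs^i \to \K[\x^*;\z^*]$ via $\vs_x^i \to \D\x^*$ and $\vs_z^i = \z^i \to \z^*$.

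The main obstacle is carrying out the proximal-ADMM convergence in the present generality: $\G$ is only PSD (possibly with nontrivial kernel), $\tilde f$ is only convex (not strongly convex), and $\D$ is full column rank but not square. A self-contained argument would construct a Fej\'er-type potential of the form $\mu \|\dv_x^i - \dv_x^*\|^2 + \mu \|\D\x^i - \vs_x^*\|^2 + \|\x^i - \x^*\|_{\G}^2$, derive a per-iteration decrease from the subgradient inequalities for $\tilde f$ and $\phi$, extract summability of the increments (so that $\x^{i+1} - \x^i \to \mathbf{0}$ and $\D\x^i - \vs_x^i \to \mathbf{0}$), and use coercivity of $\phi$ together with the full column rank of $\D$ to pass to the limit along a subsequence and upgrade to full-sequence convergence.
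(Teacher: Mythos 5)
Your proposal is essentially correct, and it reaches the paper's result by a genuinely different (and arguably more illuminating) packaging of the same underlying estimate. The paper works directly with the coupled iterates: it proves a Fej\'er-type contraction for the vector $\wt^i = [(\z^i)^T\,(\vs_x^i)^T\,(\dv_x^i)^T]^T$ in the norm induced by $\mathrm{diag}(\C,\mu\I,\mu\I)$, using the PSD-ness of $\A-\B\C^{-1}\B^T$ and the closed form of the $\z$-update to absorb the cross term $\B(\z^i-\z^*)$, and then concludes by a subsequence-plus-KKT-plus-Opial argument. You instead eliminate $\z$ first and recognize the partial ADMM as a semi-proximal ADMM on the reduced problem $\min_\x \tilde f(\x)+\phi(\D\x)$ with proximal weight $\B\C^{-1}\B^T$; your verification of the $\x$-update optimality condition is correct, and the two viewpoints are in fact isometric, since $\z^i-\z^*=-\C^{-1}\B^T(\x^i-\x^*)$ gives $\|\z^i-\z^*\|^2_{\C}=\|\x^i-\x^*\|^2_{\B\C^{-1}\B^T}$, so your candidate Lyapunov function is the paper's in disguise. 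What your route buys is the ability to quote off-the-shelf convergence results for proximal/generalized ADMM with a merely PSD proximal term (the subproblem is still uniquely solvable because $\A+\mu\D^T\D\succ 0$ under full column rank of $\D$), turning the paper's five lemmas into a citation; what the paper's route buys is self-containment and a statement that applies verbatim for any initialization and any fixed number of block-Gauss-Seidel passes.

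Two small caveats, neither fatal. First, the identity $\z^i=-\C^{-1}(\B^T\x^i+\f)$ holds only for $i\ge 2$, since $\z^1$ is an arbitrary initialization; your reduction therefore describes the algorithm only from the second iteration onward (or requires re-initializing $\z^1$ accordingly), whereas the paper's direct argument never needs the closed form for $\z^i$ itself, only for $\z^{i+1}$ and $\z^*$. Second, your existence argument ("$\phi\circ\D$ coercive, hence a minimizer exists") is incomplete as stated: $\tilde f$ is a convex quadratic with a possibly singular Hessian and can decrease linearly along directions where $\phi\circ\D$ grows only linearly, so coercivity of the regularizer alone does not force coercivity of the sum. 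The paper's Lemma 1 disposes of existence by citation and is exposed to the same subtlety in the general quadratic case; in the intended instance ($f$ given by the data-fitting term, hence bounded below) both arguments are sound, but a careful write-up should either assume $f$ bounded below or argue existence via the recession cone.
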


\begin{proof}
The proof is given in Appendix \ref{sec:proof}.
\end{proof}

\begin{corollary} \label{th:corollary}
	For problem~\eqref{eq:problem_xz} with $f$ given by~\eqref{eq:new_model}, Theorem~\ref{th:main} applies. 
\end{corollary}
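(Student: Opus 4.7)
The plan is to show that the data-fitting term \eqref{eq:new_model} fits the general quadratic template \eqref{eq:quadratic_f} with block matrices that satisfy the positivity hypotheses of Theorem~\ref{th:main}; the remaining hypotheses (full column rank of $\D$ and closedness, proper convexity, coercivity of $\phi$) concern only the regularizer and are standing assumptions on problem~\eqref{eq:problem_xz}, independent of the choice of $f$. First I would partition $\Hm = \PP\widetilde\T$ row-wise, conformally with $\left[\begin{smallmatrix}\y\\ \z\end{smallmatrix}\right]$, as $\Hm = \left[\begin{smallmatrix}\Hm_1\\ \Hm_2\end{smallmatrix}\right]$, with $\Hm_1 \in \mathbb{R}^{k\times(k+d)}$ containing the rows of $\Hm$ associated with the observed pixels and $\Hm_2 \in \mathbb{R}^{d\times(k+d)}$ those associated with the unobserved ones. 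This block decomposition immediately splits the squared residual as $\|\y-\Hm_1\x\|^2 + \|\z-\Hm_2\x\|^2$.

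Expanding this sum of squares and collecting terms by their purely quadratic, bilinear, and affine parts in $\x$ and $\z$, I would match coefficients against \eqref{eq:quadratic_f} to read off $\A = \Hm^T\Hm$, $\B = -\Hm_2^T$, $\C = \I_d$, together with $\e = -\Hm_1^T\y$, $\f = \mathbf{0}$, and $g = \tfrac{1}{2}\y^T\y$. This identification is routine algebraic bookkeeping; the only place where a sign matters is in $\B = -\Hm_2^T$, which arises from the cross term $-\z^T\Hm_2\x$ in the expanded squared residual.

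The substantive step is verifying the two positivity conditions on the blocks. $\C = \I_d$ is trivially positive-definite. For the Schur-complement condition, the block identity $\Hm^T\Hm = \Hm_1^T\Hm_1 + \Hm_2^T\Hm_2$ gives
\begin{equation*}
\A - \B\C^{-1}\B^T = \Hm^T\Hm - \Hm_2^T\Hm_2 = \Hm_1^T\Hm_1,
\end{equation*}
which is positive-semidefinite as a Gram matrix. I expect this to be essentially the entire content of the argument: the only obstacle, if one wishes to call it that, is recognizing that the cancellation happens exactly and leaves the Gram matrix of the observation rows of $\Hm$, a quantity which is automatically PSD with no hypothesis on $\widetilde\T$ whatsoever. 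With all the hypotheses of Theorem~\ref{th:main} verified, the corollary follows directly.
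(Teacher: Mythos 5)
Your proposal is correct and follows essentially the same route as the paper: the identifications $\A=\Hm^T\Hm$, $\B=-\Hm_2^T$ (the paper writes $\B^T=-\M_z\Hm$ with $\M_z=[\mathbf 0~\I_d]$), $\C=\I_d$, $\e=-\Hm_1^T\y$, $\f=\mathbf 0$, $g=\tfrac{1}{2}\y^T\y$ are exactly those given there. The only difference is that the paper merely asserts that the conditions of Theorem~\ref{th:main} are then satisfied, whereas you explicitly carry out the Schur-complement cancellation $\A-\B\C^{-1}\B^T=\Hm_1^T\Hm_1$, which is a welcome addition rather than a deviation.
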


\begin{proof}
	If we make $\A = \Hm^T \Hm$, $\B^T = - \M_z \Hm$, $\C = \I_d$, $\e^T = - \y^T \M_y \Hm$, $\f = \mathbf{0}$, and $g = \frac{1}{2} \y^T \y$, where $\M_y = [\I_k~\mathbf 0]$ is of size $k \times (k+d)$ and $\M_z = [\mathbf 0~\I_d]$ is of size $d \times (k+d)$, the conditions of Theorem~\ref{th:main} are satisfied.
\end{proof}
	
Regarding the practical implementation of the partial ADMM, the solutions of the minimization problems that constitute steps~\eqref{eq:admm_aprox_1} and \eqref{eq:admm_aprox_2}, with $f$ given by~\eqref{eq:new_model}, are given, respectively, by
\begin{equation} \label{eq:dual_x_iter_resolved_boundaries}
\begin{aligned}
\x^{i+1} &= \big[\Hm^T \Hm + \mu \D^T \D \big]^{-1}\\
& \qquad \big[\Hm^T \M_z^T \z^i + \Hm^T \M_y^T \y + \mu \D^T (\vs_x^i - \dv_x^i) \big]\\
&= \big[\Hm^T \Hm + \mu \D^T \D \big]^{-1}\\
& \qquad \Bigg[\Hm^T \begin{bmatrix}
		\y \\
		\z^i
	\end{bmatrix} + \mu \D^T (\vs_x^i - \dv_x^i) \Bigg]
\end{aligned}	
\end{equation}
and
\begin{equation} \label{eq:dual_z_iter_resolved_boundaries}
\z^{i+1} = \M_z \Hm \x^{i+1}.
\end{equation}
Matrices $\M_y$ and $\M_z$ are defined in the proof of \mbox{Corollary~\ref{th:corollary}}. They are masking matrices and, in particular, $\M_y$ is equivalent to the masking matrix $\M$ used in the AM method~\mbox{\eqref{model3}}, i.e., $\M = \M_y \PP$.

In practice, we have found it useful to use over-relaxation with a coefficient of 2 in the update of $\z$, and therefore we'll replace \mbox{\eqref{eq:dual_z_iter_resolved_boundaries}} with
\begin{equation}
\z^{i+1} = 2 \M_z \Hm \x^{i+1} - \z^i.
\end{equation}

In (\ref{eq:dual_x_iter_resolved_boundaries}), $\Hm^T\Hm = \widetilde \T^T \PP^T \PP \widetilde \T = \widetilde \T^T \widetilde \T$ is a BCCB matrix. If $\D^T \D$ is also BCCB, the matrix inverse in~(\ref{eq:dual_x_iter_resolved_boundaries}) can be efficiently computed by means of the FFT. On the other hand, since $\Hm = \PP \widetilde \T$, products by $\Hm$ or $\Hm^T$ can be computed as products by the BCCB matrix $\widetilde \T$ followed or preceded by the appropriate permutation, and therefore can also be efficiently computed by means of the FFT. Consequently, the iterations of the proposed method are computationally efficient, having complexity $O[(k+d) \log (k+d)]$.

The MATLAB code for the proposed ADMM-based method is available at https://github.com/alfaiate/DeconvolutionIncompleteObs.

\section{Experimental study} \label{sec:exp}

This section has two parts. The first one, corresponding to Subsection \ref{iddbm3d}, illustrates the use of the framework of Fig.~\ref{alg:image_estimation} with a state-of-the-art deblurring method that assumes periodic boundary conditions, to adapt it to the unknown boundaries situation. We plug into the mentioned framework, without modification, a fast, high-quality FFT-based deconvolution method: IDD-BM3D~\cite{Danielyan2012}. As a result of the incorporation into the framework, we obtain a deconvolution method that uses unknown boundaries, and that still retains the speed of FFT-based matrix operations. We compare the results of the resulting method with those of the commonly adopted solution of using edge tapering to deal with the unknown boundaries. The second part of this section, consisting of Subsections \ref{sec:deblurring} to \ref{sec:appraisal}, presents experimental results on the use of the proposed ADMM-based method, and comparisons with AM and with another state-of-the-art method. 

In the deblurring tests described ahead, we created each blurred image by performing a circular convolution of the corresponding sharp image with the desired blurring kernel, and then cropping the result, keeping only the part of the circular convolution that coincided with the linear convolution. We then normalized the images so that all pixel values were in the interval $[0,1]$.  Finally, we added i.i.d. Gaussian noise with a blurred signal-to-noise ratio (BSNR) of 50 dB, unless otherwise indicated.

\subsection{Use of Framework 1 with IDD-BM3D}
\label{iddbm3d}

As previously mentioned, step 2 of Framework 1 can be implemented using, essentially, any deconvolution method. One such method is the state-of-the-art IDD-BM3D~\cite{Danielyan2012}, which assumes circular boundary conditions to be able to perform fast matrix operations in the frequency domain by means of the FFT. It takes a frame-based approach to the deconvolution problem, and performs both a deconvolution and a denoising step. Additionally, it runs another deconvolution method~\cite{Dabov2007} for initialization. We used the published IDD-BM3D software, without change, to implement step 2 of Framework 1. As mentioned in Section~\ref{sec:basicstructure}, step 3 was performed by computing $\widetilde \T \x$ and selecting from it the pixels that corresponded to $\z$.

In the experimental tests, we compared four different situations: (a) direct application of IDD-BM3D without the use of Framework 1, to assess the effect of the method's assumption of circular boundary conditions on an image that didn't obey those conditions; (b) similar to (a), but preprocessing the observed image by edge tapering, to reduce the effect of the mentioned assumption; (c) the use of IDD-BM3D within Framework 1, as described above; and (d) the use of the \emph{partial ADMM} method (the latter situation was included for completeness, but is not essential to the comparison being made here). In situations (a) and (b), IDD-BM3D was run for 400 iterations; in situation (c), we performed 80 iterations of the main loop of Framework 1, and within each of these, IDD-BM3D was run for 5 iterations, giving, again, a total of 400 iterations of IDD-BM3D; in situation (d), we used the Proposed-AD version of the partial ADMM method (see the next section for details).

We ran the experiment using the \emph{cameraman} image with size $256 \times 256$ pixels, blurred with a $9 \times 9$ boxcar filter, and with additive i.d.d. Gaussian noise with a BSNR of 40 dB. In situation (b), the image borders were smoothed with MATLAB's \emph{edgetaper} function with an $11 \times 11$ boxcar filter. This size of the filter was experimentally chosen so as to yield the best final results in terms of improvement in signal-to-noise ratio (ISNR). In situations (a) and (b), IDD-BM3D estimated sharp images of size $m \times n$. In situations (c) and (d), the use of Framework 1 led to estimated images of size $m' \times n'$. To ensure a fair comparison, only the central $m \times n$ regions of the estimated images were used to compute the ISNR, in cases (c) and (d).

The results of the tests, along with the corresponding ISNR values, are shown in Fig.~\ref{fig:cameraman_bm3d}. As can be seen, using IDD-BM3D without taking into account that the boundary conditions were not circular (situation (a)) produced very strong artifacts. With edge tapering (situation (b)), the artifacts were much reduced, although some remained visible; there also was some loss of detail near the image borders. With the use of IDD-BM3D within Framework 1 (situation (c)), there were barely any artifacts, and the image remained sharp all the way to the borders. The values of the ISNR agree with these observations. The partial ADMM method (situation (d)) yielded somewhat lower ISNR than IDD-BM3D within Framework 1. This agrees with the fact that IDD-BM3D is among the best existing image deblurring methods, with a quality that is hard to equal.

\begin{figure*}[!t]
	\centering
	\captionsetup[subfloat]{justification=centering}
	\subfloat[]{\includegraphics[scale=.5]{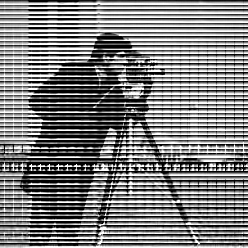}%
		\label{fig:cameraman_bm3d_valid}}
	\hfil
	\captionsetup[subfloat]{justification=justified}
	\subfloat[]{\includegraphics[scale=.5]{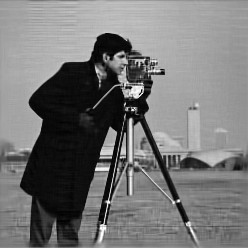}%
		\label{fig:cameraman_bm3d_valid_edgetaper}}
	\hfil
	\subfloat[]{\includegraphics[scale=.5]{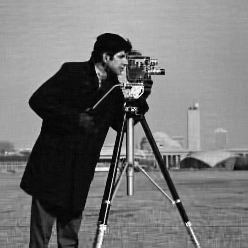}%
		\label{fig:cameraman_bm3d_valid_boundaries}}
	\hfil
	\subfloat[]{\includegraphics[scale=.5]{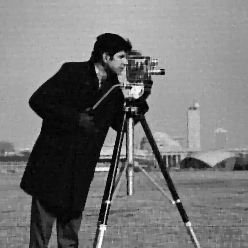}%
		\label{fig:cameraman_deblur_as_iddbm3d}}
	\caption{Use of the IDD-BM3D deblurring method and Framework 1 in a blurred image with unknown boundaries. The experiments were run for the $256 \times 256$ \textit{cameraman} image blurred with a $9 \times 9$ boxcar filter. (a) Plain IDD-BM3D (ISNR = -14.62 dB). (b) IDD-BM3D with pre-smooth\-ing of the blurred image's borders (ISNR = 8.27 dB). (c) Framework 1, with step 2 implemented through IDD-BM3D (ISNR = 9.64 dB). (d) ADMM-based implementation of Framework 1 (Proposed-AD version, $\lambda = 4 \times 10^{-5}$ ) (ISNR = 8.43 dB).}
	\label{fig:cameraman_bm3d}
	\vspace{-15pt}	
\end{figure*}

\subsection{Proposed method: deblurring with unknown boundaries}
\label{sec:deblurring}

In this and the following subsections, we present results of the ADMM-based deblurring method proposed in Section~\ref{sec:partialadmm}, and we compare it with two published state-of-the art methods. The first of these is AM, which was chosen because it is an ADMM-based method specifically developed for the problem of deblurring with unobserved pixels, and therefore bears some resemblance to the proposed partial ADMM. Although the MATLAB code for AM was made available by the authors of~\cite{Almeida2013a}, we implemented the method ourselves in a form that was more efficient than the published one, by performing the products by the matrices $\D$ and $\D^T$ in the space domain instead of the frequency domain (the latter required the computation of FFTs and element-wise multiplications, while our implementation only required subtractions). The second method used for comparison was the primal-dual algorithm of Condat~\cite{Condat2013}, which we shall denote by CM. We chose this method because it can be expressed in a form that does not require the inversion of matrices related to the blurring operator, and this inversion is a computational bottleneck of most deblurring methods based on exact blurring models, as discussed in Section~\mbox{\ref{sec:problemstatement}}. While~\cite{Condat2013} does not explicitly consider the problem of image deblurring with unobserved pixels, in~\cite{Condat2014} it is shown how to adapt the method to a deblurring and demosaicing problem. We implemented this adaptation of the method in a form appropriate to the observation model of Eq.\ \eqref{model3}. For completeness, we also show results obtained with an approximation of the standard ADMM. As discussed in Section \ref{sec:partialadmm}, the direct application of that method is impracticable, even for small images, in most present-day computers. We give the results obtained by approximately solving Eq.~\eqref{eq:admm_xz_1} through 1000 conjugate-gradients iterations. This number of iterations was experimentally chosen to approximately yield the best speed. We denote this method by `ADMM-CG'.

All the methods under comparison minimize equivalent cost functions, and therefore we considered it quite probable that they would yield virtually identical estimated images, even though, in our tests, we were not strictly under the convergence conditions of Theorem~\ref{th:main}, for reasons that we will discuss further on. As described ahead, we experimentally confirmed, in a few cases, that all the methods converged to essentially the same images, and therefore we focused the subsequent comparisons on the computational efficiencies of the various methods, and not on the quality of their results. The computation times that we report ahead were obtained using MATLAB on an Intel Core i7 CPU running at 3.20~GHz, with 32~GB of RAM.

We used, in the deblurring tests, two well-known images: \textit{Lena} (in grayscale) and \textit{cameraman}, both of size $256 \times 256$ pixels. These images were blurred with boxcar filters of sizes between $3 \times 3$ and $21 \times 21$, and with truncated Gaussian filters with supports of the same sizes; for a Gaussian filter with a support of size $l \times l$, we used a standard deviation of $\sqrt{l}$.

As in~\cite{Almeida2013a, Matakos2013, Condat2014}, we used isotropic TV to regularize our problem, i.e., we made $\phi(\D \x) = \lambda \sum_{j = 1}^{k + d} \sqrt{(\D_h \x)_j^2 + (\D_v \x)_j^2}$, where $(\mathbf{a})_{j}$ denotes the $j$th element of the vector $\mathbf{a}$; $\D_h$ and $\D_v$ were such that the products by these matrices computed, respectively, the horizontal and vertical first-order differences of a discrete image, with periodic boundaries; $\D = [\D_h^T \D_v^T]^T$; and $\lambda > 0$. The matrix $\D^T \D$ was BCCB, as required for both the proposed method and AM to be efficient (CM has no such requirement).

With the isotropic TV regularizer, the solution of problem~(\ref{eq:admm_aprox_3}) is obtained by a vector soft-thresholding operation, $\vs_x^{i+1} = \text{vector-soft} \big( \vs_x^i, \frac{\lambda}{\mu} \big)$ \cite{Wang2008, Donoho1995}. With this regularizer, we can guarantee the uniqueness of the solution of problem \eqref{eq:problem_xz} if $\mathcal{N}(\Hm) \cap \mathcal{N}(\D) = \{\mathbf 0\}$, which is true if $\textbf{1}_k \notin \mathcal{N}(\Hm)$, where $\mathcal{N}(\cdot)$ denotes the null space of a matrix. The latter condition is normally verified, since real-world blur kernels usually have nonzero DC gain. 

The guarantee of convergence given by Theorem \ref{th:main} requires that matrix $\D$ be full column rank. An assumption of this kind is common in the literature, when studying the convergence of primal-dual methods, even though it can be relaxed in some cases (see, e.g.,~\cite{Boyd2011} for the ADMM case). With the isotropic TV regularizer, $\D$ is not full column rank (its rank is $k+d-1$), which means that we did not have a formal guarantee of convergence of the proposed method in our experimental tests. This was not the case for AM, where the use of an extra variable splitting makes $\D$ full rank. It was also not the case for CM, in which $\D$ does not need to be full rank. In practice, we found that the four methods always converged, and we had some experimental evidence that they all converged to the same solution, as reported ahead. If we wished to have a formal guarantee of convergence of the proposed method, we could have used one of two approaches: (1) to use a new variable to decouple the convolution operator from $\x$, which would lead to, e.g., $\D = [\D_h^T \D_v^T \Hm^T]^T$, which is full column rank if $\mathcal{N}(\D_h) \cap \mathcal{N}(\D_v) \cap \mathcal{N}(\Hm) = \{\mathbf{0}\}$ (the regularizer would need to be changed accordingly), or (2) to modify the discrete difference matrices, making them full rank, e.g., by adding to them $\varepsilon \I_{k+d}$ with a small $\varepsilon > 0$.

In the proposed method, the complexity of an iteration was dominated by FFTs: three for steps \eqref{eq:admm_aprox_1} and \eqref{eq:admm_aprox_2}, and one for step~(\ref{eq:admm_aprox_3}); the number of FFTs thus depended on the number of BGS passes (each pass involved three FFTs). In our implementation, the other two methods both involved four FFTs per iteration.\footnote{The publicly available implementation of AM requires seven FFTs per iteration.} Our method required the storage of one variable with dimension $d$ and five with dimension ${k+d}$. AM required the storage of seven variables with dimension ${k+d}$, and CM required the storage of four with dimension ${k+d}$.

In order to ensure fair comparisons, the initial estimate of $\x$ was obtained in the same way for the four methods, by padding the observed image with a boundary zone formed by repeating the border pixels of the observed image. This initialization led to slightly faster results than other alternatives, such as linear interpolation between opposite-border pixels.

All the methods have parameters that need to be adjusted. The optimal tuning of the regularization parameter $\lambda$ is a complex issue, and a number of techniques have been proposed for that purpose, e.g. using Stein's unbiased risk estimator~\cite{Donoho1995}, generalized cross-validation~\cite{Golub1979}, or a measure of the whiteness of the reconstruction error~\cite{Almeida2013}. Since, as mentioned above, we wanted to focus our comparison on the computational costs, and not on the quality of the deblurring results, we fixed $\lambda = 5 \times 10^{-6}$, a value that yielded visually good estimated images.

Another parameter that needed to be set, both in our method, in AM and in ADMM-CG, was the penalization parameter $\mu$, which influences the convergence speed. The authors of~\cite{Almeida2013a} and of~\cite{Matakos2013} developed their own heuristics to set this parameter. Since heuristics like these require an amount of fine-tuning that can be time-consuming, and since the tuning done for some images may not extend to other images, we instead followed a simple strategy, proposed in~\cite{Boyd2011}. It involves little computation, but requires that we keep track of both $\vs^{i+1}$ and $\vs^i$. The penalization parameter is adapted in every iteration, according to 
\begin{equation} \label{eq:varying_penalty}
	\mu^{i+1} = \left\{
	\begin{array}{l l}
		2\mu^i & \quad \text{if $\| \dv^{i+1} - \dv^i \| > t \| \vs^{i+1} - \vs^i \|$,}\\
		0.5\mu^i & \quad \text{if $\| \vs^{i+1} - \vs^i \| > t \| \dv^{i+1} - \dv^i \|$,}\\
		\mu^i & \quad \text{otherwise.}
	\end{array} \right.
\end{equation}
Additionally, $\dv^{i+1}$ needs to be rescaled after updating $\mu$: it should be halved in the first case above, and doubled in the second one. In the proposed method, we used $t=3$. We used the same strategy to set the penalty parameters of AM. There are two such parameters in that method: one similar to the one of our method, and another one relating to the variable splitting used to decouple matrix $\M$ from matrix $\tilde{\T}$. We adapted the two parameters independently, and used $t=10$ for both because lower values led to instabilities. For ADMM-CG, we used $t=3$. The use of the same strategy to set the penalty parameters of the three methods was intended to make the comparison between both as fair as possible. The convergence proof of Theorem 1 is difficult to adapt to a varying $\mu$, but still applies if this parameter is kept fixed after a certain number of iterations. \footnote{For alternative strategies for setting the penalization parameters, see~\mbox{\cite{Boyd2011} and \cite{He2000}}.}

As already mentioned, another parameter of our method is the number of BGS passes. The proof in Appendix~\ref{sec:proof} assumes that a single pass is used, and is easily extendable to any fixed number of passes. If the number of passes changes along the iterations, the proof still applies if that number becomes fixed after a certain number of iterations. For setting this parameter, we found that a simple strategy was useful: for a blur of size $(2b+1)\times(2b+1)$, we set the number of passes to $b$. We found this strategy to yield a good choice of the number of passes, irrespective of the noise level of the observed image. In what follows, we report the results obtained using this adaptive strategy (designated by `Proposed-AD'), and also the results obtained with just one BGS pass (designated by Proposed-1'); the latter are reported because they correspond to an especially simple, and therefore interesting situation.

In CM, in order to guarantee convergence, we chose its proximal parameters $\tau$ and $\sigma$ so as to satisfy the condition given in~\cite{Condat2014}, $\tau( \frac{\beta}{2} + \sigma \| \D^T \D \|_I ) < 1$, where $\| \cdot \|_I$ denotes the induced norm of a matrix, and $\beta$ is the Lipschitz constant of the gradient of $f$, $\beta = \| \M_y \Hm\|_I^2$. It can be shown~\cite{Chambolle2004} that $\| \D^T \D \|_I \leq 8$; we also have $\| \M_y \Hm\|_I \leq \| \M_y \|_I \| \Hm\|_I$, and $\| \M_y \|_I = 1$, $\| \Hm \|_I = 1$.  Therefore, in order to obey the convergence condition, we chose $\tau = \frac{0.99}{0.5+8 \sigma}$. We verified experimentally that the speed of convergence was approximately maximal as long as $\sigma \in [10^{-8}, 10^{-4}]$, and therefore chose $\sigma = 10^{-6}$.

The variable $\z$, which exists only in the proposed method and in ADMM-CG, was initialized with an approximate solution of the  quadratic regularization problem
\begin{equation*}
	 \min_{\x,\z} f(\x,\z) + \frac{\tau}{2}\|\x \|^2,
\end{equation*}
where $\tau$ is a small positive value.  The approximate solution was computed by alternating minimization with respect to  $\x$ and  $\z$, yielding the iterative procedure
\begin{align}
   \label{eq:ini_x}
   \x^i & = \left( \Hm^T\Hm + \tau{\bf I}\right)^{-1}\Hm^T
                        \begin{bmatrix}
                           \y \\
                           \z^{i-1}
                         \end{bmatrix}\\
  \label{eq:ini_z}
  \z^i & = {\bf M}_z\Hm \x^i,
\end{align}
for $i=1,2,\cdots,100$ , $\tau =10^{-3}$, with $\z^0$ obtained by padding $\bf y$ with its border elements.  The complexity involved in the computation of  $\x^{i+1},\z^{i+1}$ is dominated by the solution of the linear system implicit in \eqref{eq:ini_x}. Given that  $\left( \Hm^T\Hm + \tau{\bf I}\right)$ is BCCB, that solution can be efficiently found in the frequency domain, with complexity $O[(k+d) \log (k+d)]$.

To check whether the four methods converged to the same result, we ran them for a very large number of iterations ($10^6$) in a few of the cases mentioned ahead, and found that, in each case, the results were essentially the same for all methods: the root-mean-square errors (RMSE)\footnote{The RMSE was defined as

\vspace{.5mm}

{\centering $RMSE =  \sqrt{1 / (m' \times n')\sum_{i=1}^{m' \times n'} (x_i - x_i^r)^2}$, \par}

\noindent where $x_i$ and $x_i^r$ denote, respectively, the pixels of the estimated image and of the reference image.} among the results from different methods were always below $10^{-7}$, which is much below what is visually distinguishable. Given this, we arbitrarily chose, for all tests, the results of one of the methods (AM), after the mentioned $10^6$ iterations, as representatives of the solutions of the corresponding problems. We used these representatives as reference images for the evaluation of the quality of the results of the four methods. Our choice of AM to compute the reference images did not especially benefit this method relative to the other ones, because, if we had chosen any of the other two methods instead of AM, the RMSE values of the results of the tests, computed relative to them, would have been essentially the same. We did not use the original sharp images as references, for two reasons. First, given the experiments mentioned in the beginning of this paragraph, we had good reasons to believe that the four methods converged to the same fixed point, in each problem. Second, the solutions of the optimization problems were slightly different from the original sharp images, due to the presence of the noise and of the regularizer, and our main interest was in assessing the speed of convergence of the methods to the solutions of the optimization problems, not in the recovery of the original images. If we had used the original images as references, it might happen that, in their path to convergence, some of the methods would pass closer to those images than other methods, and this could make them stop earlier. This would give them an apparently better speed, while they might not truly have a better convergence speed. In what follows, when we mention RMSE values, these were computed relative to the above-mentioned reference images, using the whole images, including the boundary zone, in the computation.


Fig.~\ref{fig:grafico_conv} illustrates the behavior of the four methods during the optimization. The estimated images from all the methods were visually indistinguishable from one another. Figure~\ref{fig:cameraman} shows a result of the Proposed-AD method. In Table~\ref{tab:results}, we show the computing times for \emph{cameraman} with Gaussian blurs of various sizes. The times for \textit{Lena} with the same blurs, and for both images with the boxcar blurs, are presented in Appendix~\ref{sec:app_exp}. In summary, the Proposed-AD method was faster than AM for small and medium-sized boxcar blurs and for small Gaussian blurs, the two methods had similar speeds for large boxcar blurs and medium-sized Gaussian blurs, and AM was faster for large Gaussian blurs.


\begin{figure}[!t]
	\centering
	\vspace{-6pt}
	\includegraphics[trim={0 1pt 0 14pt},clip,scale=.48]{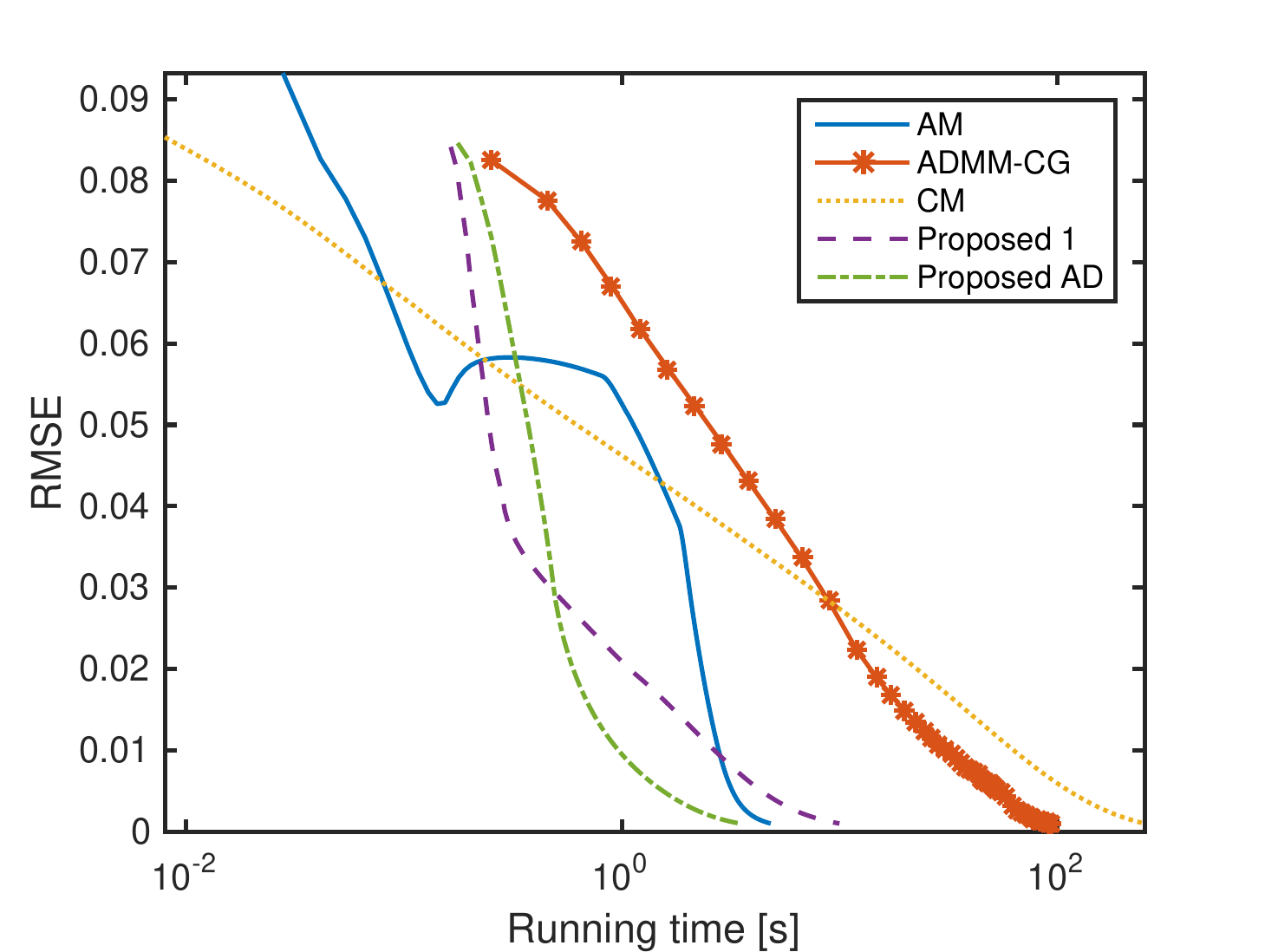}%
	\vspace{-6pt}
	\caption{Deblurring: RMSE of the estimated images as a function of running time, for the various methods. This test used the \textit{cameraman} image with a $13 \times 13$ boxcar blurring filter.}	
	\label{fig:grafico_conv}
\end{figure}

\begin{figure}[!t]
		\vspace{-10pt}
	\centering
	\subfloat[]{\includegraphics[scale=.375]{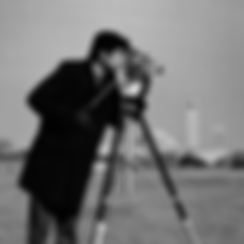}%
		\label{fig:cameraman_original}}
	\hfil
	\subfloat[]{\includegraphics[scale=.375]{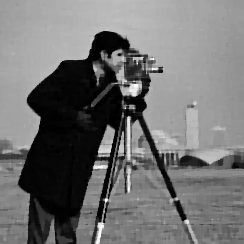}%
		\label{fig:cameraman_deblur}}
	\caption{Deblurring: observed (a) and estimated (b) images using the Proposed-AD method. The experiments were run for the \textit{cameraman} image with a $13 \times 13$ boxcar blur.}
	\label{fig:cameraman}
		\vspace{-15pt}
\end{figure}

\begin{table}
	\renewcommand{\arraystretch}{1}
	\caption{Results for \textit{cameraman} with Gaussian blurs of various sizes. The value of $\kappa$ is the condition number of matrix $\Hm$. The two rightmost columns show the number of iterations and the time taken to reach an RMSE below $10^{-3}$.}
	\label{tab:results}
	\centering
\begin{tabular}{l|c|c|c|c}
Method &Blur size &$\kappa \times 10^3$ &Iterations &Time (s) \\
\hline
Proposed-1 & \multirow{5}{*}{5} & \multirow{5}{*}{247.0} &   85 & 0.866  \\ 
Proposed-AD & & &   56 & 0.770 \\ 
AM & & &  178 & 1.973 \\ 
ADMM-CG & & &   45 & 29.355 \\ 
CM & & & 14562 & 63.797 \\ 
\hline 
Proposed-1 & \multirow{5}{*}{13} & \multirow{5}{*}{21590.6} &  531 & 4.590  \\ 
Proposed-AD & & &  117 & 2.687 \\ 
AM & & &  313 & 3.399 \\ 
ADMM-CG & & &   71 & 106.450 \\ 
CM & & & 123912 & 539.466 \\ 
\hline 
Proposed-1 & \multirow{5}{*}{21} & \multirow{5}{*}{686384.8} & 5366 & 45.169  \\ 
Proposed-AD & & &  542 & 17.858 \\ 
AM & & &  882 & 9.478 \\ 
ADMM-CG & & &  319 & 223.828 \\ 
CM & & & 305227 & 1334.286 \\ 
\hline 
\end{tabular}
\end{table}

\subsection{Proposed method: inpainting, superresolution, and demosaicing} \label{sec:exp_app}

The experiments reported in this subsection used the settings that were described in the previous subsection, unless otherwise noted. In all experiments, the boundary zone was considered as unobserved, as in the previous subsection.

\paragraph{Inpainting}

As mentioned in Section \ref{sec:framework}, the inpainting problem can be formulated within the proposed framework by simply including in $\z$ the unobserved pixels. We performed tests on a problem of simultaneous deblurring and inpainting, using the \emph{Lena} image with a boxcar blur of size $13 \times 13$, in which some zones were unobserved, as shown in Fig.~\ref{fig:lena_inpainting} (a). The regularization parameter was kept at $\lambda = 2 \times 10^{-6}$. The result obtained with the Proposed-AD method can be seen in Fig.~\ref{fig:lena_inpainting} (b). The other methods produced visually indistinguishable results. The evolution of the different methods during the optimization process is shown in Fig.~\ref{fig:grafico_conv_inpainting}.

\begin{figure}[!t]
	\centering
	\subfloat[]{\includegraphics[scale=.5]{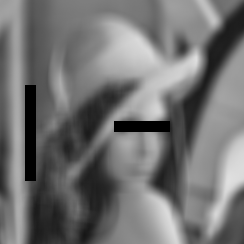}%
		\label{fig:lena_original_inpainting}}
	\hfil
	\subfloat[]{\includegraphics[scale=.5]{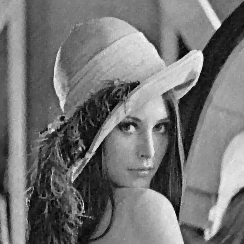}%
		\label{fig:lena_deblur2_inpainting}}
	\caption{Inpainting with deblurring: observed (a) and estimated (b) image using the Proposed-AD method. In the observed image, the black rectangles correspond to unobserved regions.}
	\label{fig:lena_inpainting}
\end{figure}

\begin{figure}[!t]
	\centering
	\includegraphics[trim={0 0 0 20pt},clip,scale=.48]{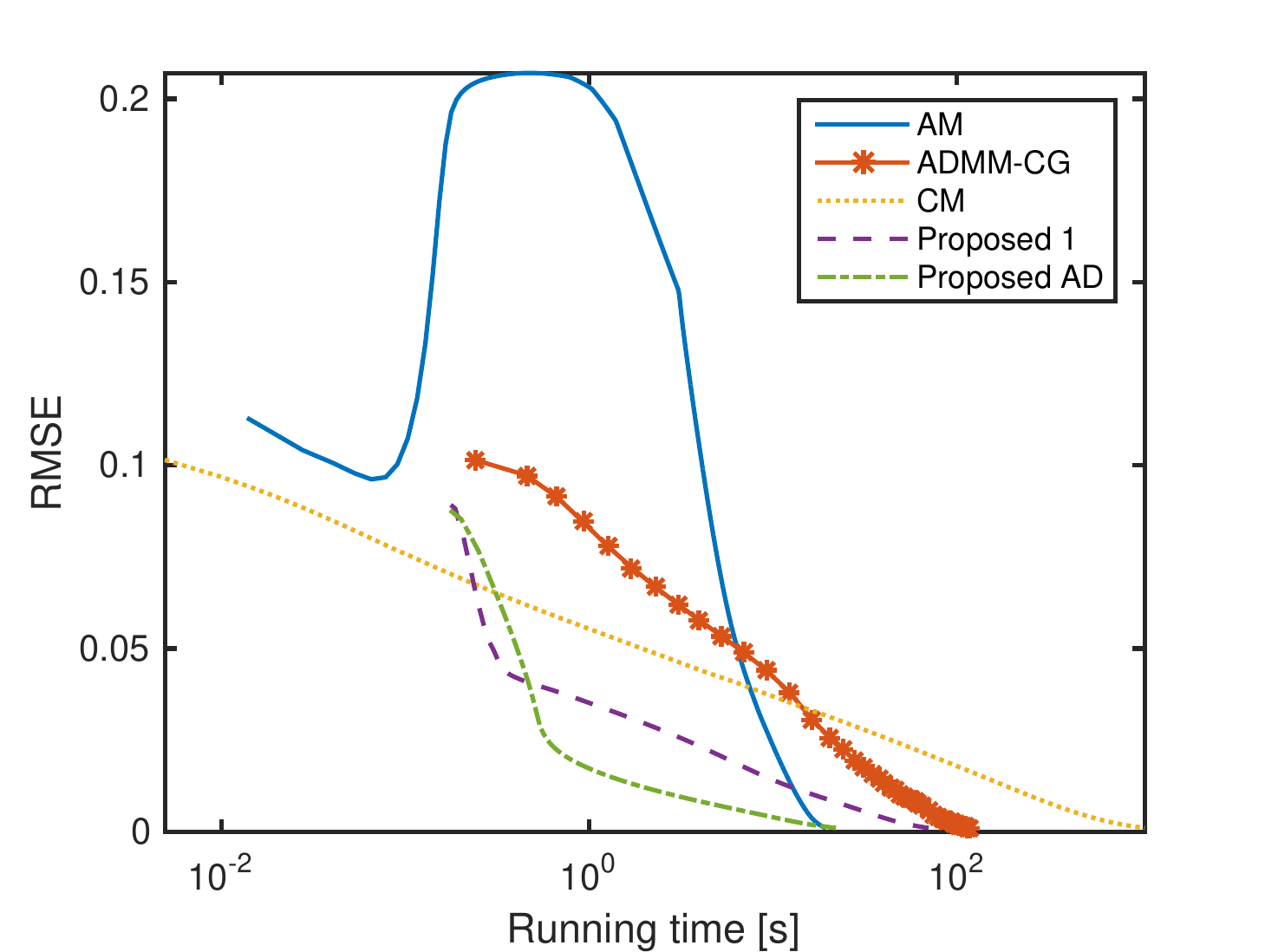}%
	\caption{Inpainting with deblurring: RMSE of the estimated images as a function of running time, for the various methods.}	
	\label{fig:grafico_conv_inpainting}
\end{figure}

\paragraph{Superresolution} In superresolution problems, $\z$ should include the extra pixels that would have been present if the image had the higher resolution. For this test, we used a $608 \times 337$ crop of a single band of the so-called \emph{Pavia} hyperspectral image. The experiments consisted of increasing the resolution by a factor of 3, and we used a $3 \times 3$ boxcar blurring filter in the estimation of the high-resolution image. The regularization parameter was set, again, to $\lambda = 2 \times 10^{-4}$. Figs.~\ref{fig:pavia_sr} and~\ref{fig:grafico_conv_sr} show the results. The estimated images from all the methods were visually indistinguishable from one another.

\begin{figure}[!t]
	\centering
	\subfloat[]{\includegraphics[scale=0.75]{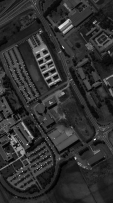}%
		\label{fig:pavia_original_sr}}
	\hfil
	\subfloat[]{\includegraphics[scale=.25]{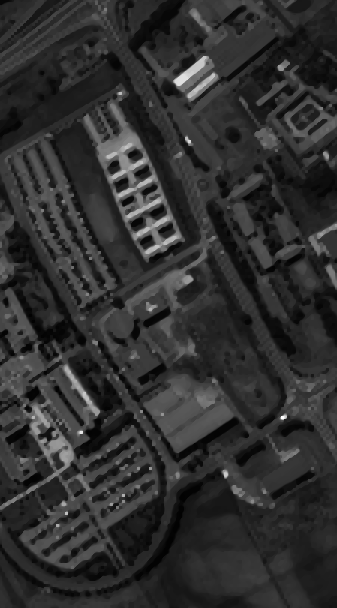}%
		\label{fig:pavia_deblur2_sr}}
	\caption{Superresolution: observed (a) and estimated (b) images using the Proposed-AD method. The upsampling ratio was 3.}
	\label{fig:pavia_sr}
\end{figure}

\begin{figure}[!t]
	\centering
	\includegraphics[trim={0 0 0 22pt},clip,scale=.48]{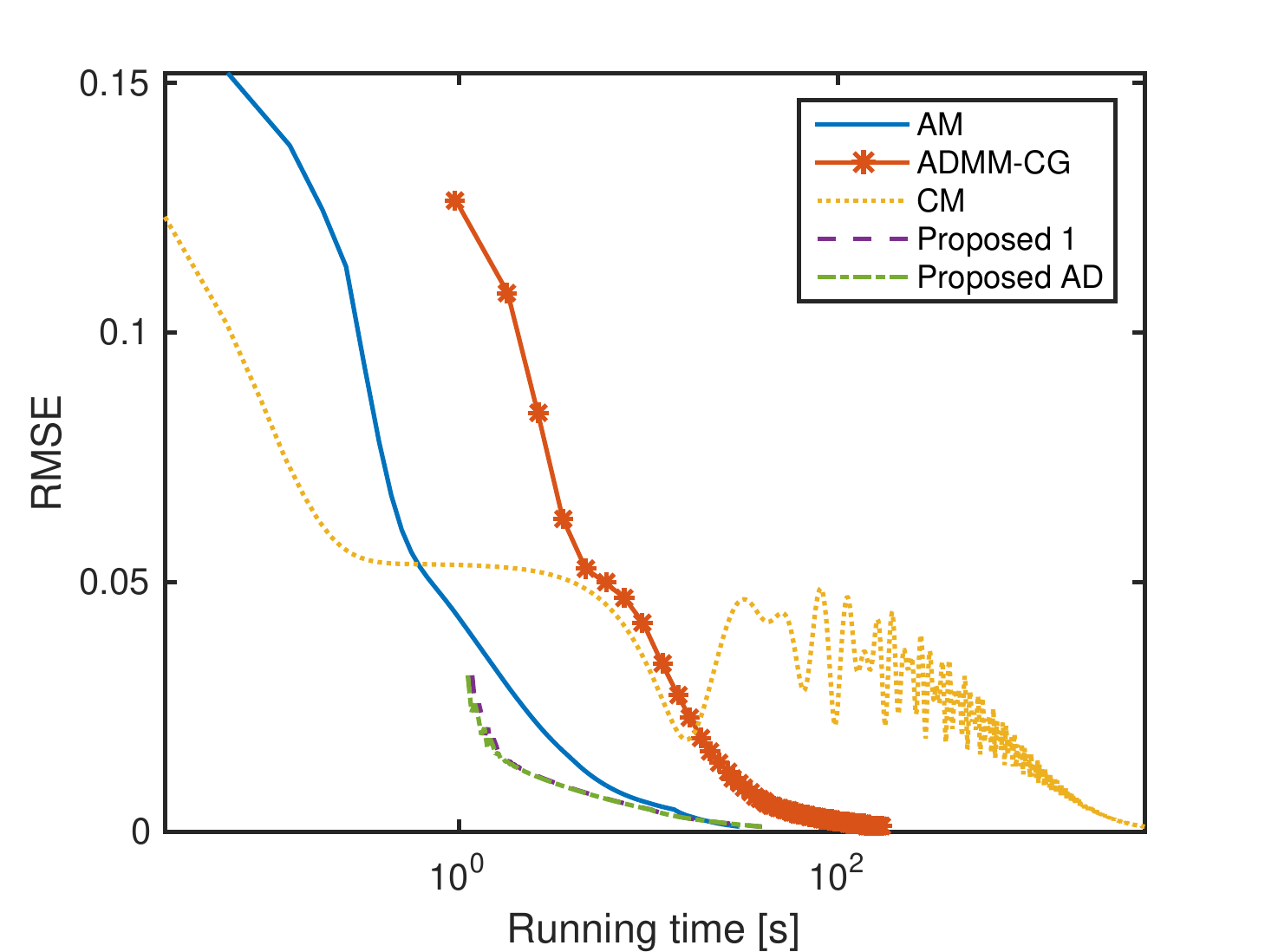}%
	\caption{Superresolution: RMSE of the estimated images as a function of running time, for the various methods.}	
	\label{fig:grafico_conv_sr}
\end{figure}

\paragraph{Demosaicing} \looseness -1 The demosaicing operation, which is a step of the processing pipeline used in most digital cameras, seeks to reconstruct a full-resolution color image from its subsampled RGB channels. In most digital cameras, the image sensor observes only one color in each pixel, and therefore the color channels are subsampled relative to the resolution of the sensor. The goal of demosaicing is to fuse the color channels, obtaining full-resolution images for all of them. We performed tests on a problem of simultaneous deblurring and demosaicing. We used a $289 \times 253$ crop of the well-known \emph{parrot} image as ground truth, and synthesized the observed image by first blurring it with an 8x8 boxcar filter, and then applying a Bayer color filter. In the representation of the problem within our framework, vector $\y$ contained the observed pixels from the three color channels, and vector $\z$ contained the unobserved pixels from all of them (including the boundary zone). In the vector $\y$, the observed pixels were grouped according to the channels they belonged to, by making $\y = [(\y^1)^T (\y^2)^T (\y^3)^T]^T$; the same was done for the vectors $\z$ and $\x$. The data-fitting term had the form $f(\uu) = 1/2 \sum_{j=1}^3 \big\| \big[ \begin{smallmatrix} \y^j \\ \z^j \end{smallmatrix} \big] - \Hm \x^j \big\|^2$. As in the demosaicing example of~\cite{Condat2014}, we used a vectorial version of the TV regularizer, which promotes the alignment of the discontinuities across channels: $\phi(\D \x) = \lambda \sum_{i=1}^{k+d} \sqrt{\sum_{j=1}^3 \{ (\D_h \x^j)_i^2 + (\D_v \x^j)_i^2 \}}$. The regularization parameter was set to $\lambda = 10^{-5}$. The results can be seen in Figs.~\ref{fig:parrot_demosaicing} and~\ref{fig:grafico_conv_demosaicing}. Once again, the estimated images from all the methods were visually indistinguishable from one another.

\begin{figure}[!t]
	\centering
	\subfloat[]{\includegraphics[scale=.3]{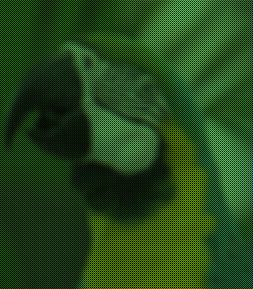}%
		\label{fig:parrot_original_demosaicing}}
	\hfil
	\subfloat[]{\includegraphics[scale=.3]{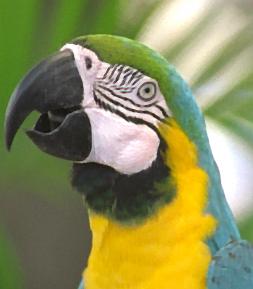}%
		\label{fig:parrot_deblur2_demosaicing}}
	\caption{Demosaicing: observed (a) and estimated (b) image using the Proposed-AD method.}
	\label{fig:parrot_demosaicing}
\end{figure}

\begin{figure}[!t]
	\centering
	\includegraphics[trim={0 0 0 20pt},clip,scale=.48]{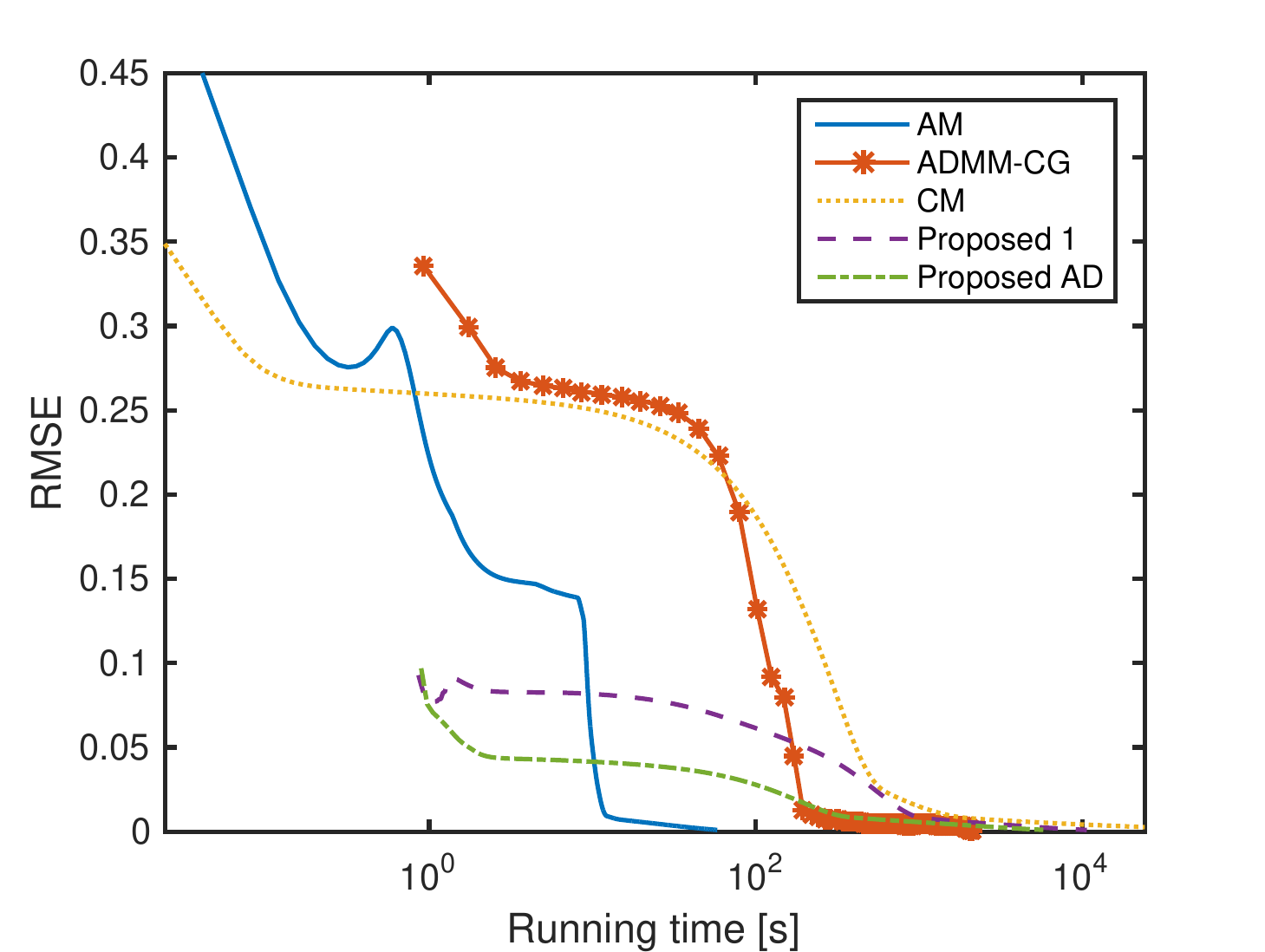}%
	\caption{Demosaicing: RMSE of the estimated images as a function of running time, for the various methods.}	
	\label{fig:grafico_conv_demosaicing}
\end{figure}

\subsection{Appraisal}
\label{sec:appraisal}

The example of the use of IDD-BM3D within the proposed framework illustrates the fact that this framework can be used to convert existing deblurring methods that assume artificial boundary conditions to methods that use unknown boundaries. As far as we know, there is no other published way to accomplish this. This is a simple alternative to the use of edge tapering, yielding results of better quality, as illustrated in Subsection~\ref{iddbm3d}.

The proposed partial ADMM implementation of Framework 1 (more specifically, the Proposed-AD form) showed a performance similar to the performance of the state-of-the-art AM method, both in terms of final results and of computational speed. CM yielded similar deblurred images, but showed a significantly lower computational performance. We conjecture that this was due to the fact that both the proposed method and AM, being based on ADMM, make use of second-order information on the objective function (in terms of the matrix $\Hm^T \Hm + \mu \D^T \D$). CM, on the other hand, does not use any information of this kind. This difference in speed agrees with our experience in other image enhancement problems, in which we have repeatedly found CM-like methods to be significantly slower than ADMM-based ones.

\section{Conclusion} \label{sec:conclusion}

We have proposed a framework for deblurring images with unobserved pixels. This framework can be used to convert most deblurring methods to unknown boundaries, irrespective of the specific boundary conditions that those methods assume, being a simple, high-quality alternative to the use of edge tapering.

\looseness -1 An ADMM-based deblurring method that falls within the mentioned framework was proposed, and a proof of convergence was provided. This method can be seen as a partial ADMM with a non-dualized variable. Experimental results on problems of deconvolution, inpainting, superresolution and demosaicing, with unknown boundaries, showed, for the proposed method, a performance at the level of the state of the art.

\appendices

\section{ADMM} \label{sec:app_admm}

ADMM can be used to solve convex optimization problems of the form
\begin{equation} \label{eq:admm}
\underset{\uu}{\text{minimize}} \quad f(\uu) + \psi(\K \uu),
\end{equation}
where $f: \mathbb{R}^{n} \rightarrow \mathbb{R} \cup \{+\infty\}$ and $\psi: \mathbb{R}^{m} \rightarrow \mathbb{R} \cup \{+\infty\}$ are closed proper convex functions, and $\K \in \mathbb{R}^{m \times n}$.
The problem is first rewritten as
\begin{equation}
\begin{aligned}
& \underset{\uu, \vs}{\text{minimize}}
& & f(\uu) + \psi(\vs)\\
& \text{subject to}
& & \vs = \K \uu,
\end{aligned}
\end{equation}
where $\vs \in \mathbb{R}^{m}$.
This problem has the augmented Lagrangian~\cite{Nocedal2006}
\begin{equation} \label{eq:al}
\mathcal{L}(\uu, \vs, \dv) = f(\uu) + \psi(\vs) + \frac{\mu}{2} \Big\| \vs - \K \uu -  \dv \Big\|^2,
\end{equation}
where $\mu>0$ is a penalization parameter, and $\dv \in \mathbb{R}^{m}$ is the so-called scaled dual variable ($\mu \dv$ is the dual variable). 

The standard ADMM solves problem~(\ref{eq:admm}) through an iteration on a set of problems that are often much simpler,
\begin{align}
\uu^{i+1}\ &= \underset{\uu}\argmin \quad \mathcal{L}(\uu, \vs^i, \dv^i), \label{eq:admm_1} \\
\vs^{i+1} &= \underset{\vs}\argmin \quad \mathcal{L}(\uu^{i+1}, \vs, \dv^i), \label{eq:admm_2} \\
\dv^{i+1} &= \dv^i - (\vs^{i+1} - \K \uu^{i+1}). \label{eq:admm_3}
\end{align}

The convergence of ADMM was established in~\cite{Eckstein1992} (among others) by proving a more general version of the following theorem:

\begin{theorem} \label{th:admm}
Consider a problem of the form~\eqref{eq:admm}, in which $\K$ has full column rank. Let $\mathbf{d}^0 \in \mathbb{R}^{n}$ and  $\mu > 0$.
Assume that $\{\uu^i\}$, $\{\vs^i\}$, and  $\{\dv^i\}$ conform, for all $i$, to
\begin{align}
&\uu^{i+1} = \underset{\uu}\argmin \mathcal{L}(\uu, \vs^i, \dv^i), \\
&\vs^{i+1} = \underset{\vs}\argmin \mathcal{L}(\uu^{i+1}, \vs, \dv^i), \\
&\dv^{i+1} = \dv^i - (\vs^{i+1} - \K \uu^{i+1}).
\end{align}
If problem~\eqref{eq:admm} has a Karush-Kuhn-Tucker (KKT) pair,\footnote{A pair $(\uu$, $\pp)$ is said to be a Karush-Kuhn-Tucker pair of problem~\eqref{eq:admm} if $- \K^T\pp \in \partial f(\uu)$ and $\pp \in \partial \psi (\K \uu)$.} then the sequence $\{\uu^i\}$ converges to a solution of that problem, $\{\mu \dv^i\}$ converges to a solution of the dual problem, and $\{\vs^i\}$ converges to $\K \uu^*$, where $\uu^*$ is the limit of $\{\uu^i\}$. If problem~\eqref{eq:admm} has no KKT pair, then at least one of the sequences $\{\dv^i\}$ or $\{\vs^i\}$ is unbounded.
\end{theorem}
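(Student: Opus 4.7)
The plan is to recognize the ADMM iteration~\eqref{eq:admm_1}--\eqref{eq:admm_3} as a Douglas-Rachford (DR) splitting applied to the dual problem, thereby exhibiting it as a Krasnosel'ski\u{\i}-Mann-type iteration of a firmly nonexpansive operator $T:\mathbb{R}^m \to \mathbb{R}^m$ acting on a suitable shift of the scaled dual variable $\mu\dv^i$. Once $T$ is in hand, classical fixed-point theory for firmly nonexpansive (equivalently, $1/2$-averaged) operators delivers both halves of the theorem: convergence of the iterates when $T$ has a fixed point, and norm divergence when it does not.

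First I would write out the subdifferential optimality conditions of the three updates: from~\eqref{eq:admm_1}, $\mu\K^T(\vs^i - \K\uu^{i+1} - \dv^i) \in \partial f(\uu^{i+1})$; from~\eqref{eq:admm_2}, $\mu(\K\uu^{i+1} + \dv^i - \vs^{i+1}) \in \partial\psi(\vs^{i+1})$; and the explicit multiplier update~\eqref{eq:admm_3}. Combining these and introducing the shifted variable $\wt^i := \mu(\dv^i + \vs^i - \K\uu^{i+1})$ (or the equivalent Lions-Mercier reflection variable), I would check by direct substitution that $\wt^{i+1} = T\wt^i$ with $T = \tfrac12(I + R_{\mathcal{B}}R_{\mathcal{A}})$, where $R_{\mathcal{A}}, R_{\mathcal{B}}$ are the reflection operators associated with the maximal monotone operators $\mathcal{A}(\pp)=\K\,\partial f^{*}(-\K^{T}\pp)$ and $\mathcal{B}(\pp)=\partial\psi^{*}(\pp)$, whose zeros are precisely the dual solutions. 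The firm nonexpansiveness of $T$ on $\mathbb{R}^m$ is then a standard consequence of the fact that each $R_{(\cdot)}$ is nonexpansive.

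Next I would establish the bijection between fixed points of $T$ and KKT pairs of~\eqref{eq:admm}: a fixed point $\wt^*$ produces $\pp^* = J_{\mathcal{A}}\wt^*$ satisfying $0 \in \mathcal{A}(\pp^*)+\mathcal{B}(\pp^*)$, from which a $\uu^*$ satisfying $-\K^T\pp^*\in\partial f(\uu^*)$ can be extracted, and conversely any KKT pair supplies such a $\wt^*$. Assuming a KKT pair exists, the Krasnosel'ski\u{\i}-Mann theorem gives $\wt^i\to\wt^*$; unwinding the change of variables yields $\mu\dv^i\to\pp^*$ (a dual solution) and, because the residuals $\vs^{i+1}-\K\uu^{i+1} = -(\dv^{i+1}-\dv^i)$ tend to zero, also $\vs^i\to\vs^*$ with $\vs^*=\K\uu^*$. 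Full column rank of $\K$ is then used to invert the relation, setting $\uu^* = (\K^T\K)^{-1}\K^T\vs^*$ and giving $\uu^i\to\uu^*$; passing to the limit in the optimality conditions shows $\uu^*$ solves the primal problem.

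For the converse dichotomy, I would invoke the classical asymptotic behavior of iterates of firmly nonexpansive operators lacking fixed points (Pazy's theorem and the Baillon-Bruck-Reich results): if $T$ has no fixed point in $\mathbb{R}^m$, then $\|T^i\wt^0\|\to\infty$ for every starting point. Absence of a KKT pair is equivalent to absence of a zero of $\mathcal{A}+\mathcal{B}$, hence to absence of a fixed point of $T$, so the auxiliary sequence $\wt^i$ is unbounded; the definition $\wt^i = \mu(\dv^i+\vs^i-\K\uu^{i+1})$ then forces at least one of $\{\dv^i\}$ or $\{\vs^i\}$ to be unbounded. The main obstacle will be the clean identification of the ADMM recursion with the canonical DR iteration in the presence of a nontrivial linear operator $\K$: the change of variables that makes $T$ manifestly a half-sum of a reflection composition must be chosen carefully, and the correspondence between its fixed points and KKT pairs verified under the full-column-rank hypothesis on $\K$. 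This is by now standard after~\cite{Eckstein1992}, but the bookkeeping around $\K$ and the scalar $\mu$ is where mistakes are easiest to make.
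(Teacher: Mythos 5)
Your proposal is sound and follows precisely the route of the source this theorem rests on: the paper itself gives no proof of Theorem~\ref{th:admm}, quoting it as a special case of the result established in~\cite{Eckstein1992}, and the Eckstein--Bertsekas argument is exactly your identification of ADMM with Douglas--Rachford splitting on the dual --- a Krasnosel'ski\u{\i}--Mann iteration of a firmly nonexpansive operator whose fixed points correspond to KKT pairs, with the no-KKT-pair case handled by the fact that a firmly nonexpansive map without fixed points has unbounded orbits (norm divergence is more than is needed; unboundedness suffices). Beyond the bookkeeping you already flag (the exact shifted variable, maximality of the composite dual operator, and ruling out that $\K\uu^{i+1}$ alone is unbounded --- it cannot be, since it depends nonexpansively on $\vs^i - \dv^i$ --- when concluding that $\{\dv^i\}$ or $\{\vs^i\}$ must be unbounded), nothing essential is missing.
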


\section{Proof of Theorem 1} \label{sec:proof}

The proof follows a similar machinery to the ones proposed in~\cite{Chen1994, He2002, He2012a, Deng2012, Yang2011}. We start by deriving some preliminary results. Consider problem~\eqref{eq:problem_xz} with $f$ given by~\eqref{eq:quadratic_f}, and define, as in the main text,
\begin{equation*}
\uu = \begin{bmatrix}
\x \\
\z
\end{bmatrix},
\quad\K = \begin{bmatrix}
\D & \mathbf{0} \\
\mathbf{0} & \I_{d}
\end{bmatrix},
\quad
\psi (\K \uu ) = \phi(\D \x).
\end{equation*}
Also define 
\begin{equation*}
\bm{\omega} = \begin{bmatrix}\uu \\ \vs \\ \dv \end{bmatrix},
\quad \wt = \begin{bmatrix}\z \\ \vs_x \\ \dv_x \end{bmatrix},
\quad \G = \Bigl[ \begin{smallmatrix}
\C & \mathbf{0} & \mathbf{0}\\ \mathbf{0} &\mu \mathbf{I}_l & \mathbf{0}\\ \mathbf{0} & \mathbf{0}&\mu \mathbf{I}_l
\end{smallmatrix} \Bigl],
\end{equation*}
\begin{equation*}
\bm{\Pi} = \Bigl[ \begin{smallmatrix}
\mathbf{0} & \mathbf{I}_d & \mathbf{0} & \mathbf{I}_d & \mathbf{0} & \mathbf{0}\\ 
(\D^T \D)^{-1} \D^T & \mathbf{0} & \mathbf{I}_l & \mathbf{0} & \mathbf{0} & \mathbf{0}\\ 
\mathbf{0}  & \mathbf{0} & \mathbf{0} & \mathbf{0} & \mathbf{I}_l & \mathbf{0} \end{smallmatrix} \Bigl]^T,
\quad \bm{\Gamma} = \Bigl[ \begin{smallmatrix}
\mathbf{0} & \mathbf{I}_d & \mathbf{0} & \mathbf{0} & \mathbf{0} & \mathbf{0}\\ 
\mathbf{0} & \mathbf{0} & \mathbf{I}_l & \mathbf{0} & \mathbf{0} & \mathbf{0}\\ 
\mathbf{0} & \mathbf{0} & \mathbf{0} & \mathbf{0} & \mathbf{I}_l & \mathbf{0} \end{smallmatrix} \Bigl].
\end{equation*}
Note that $\wt = \bm{\Gamma} \bm{\omega}$, and therefore $\bm{\omega}$ univocally determines $\wt$.

Lemmas \ref{lemma1}--\ref{lemma5} present several results concerning the partial ADMM~\eqref{eq:admm_aprox_1}--\eqref{eq:admm_aprox_4} with $f$ defined as in~\eqref{eq:quadratic_f}. 

\begin{lemma} \label{lemma1}
	Assume that $\phi$ is closed proper convex and coercive and that $\D$ is full column rank. Then, the set of solutions of problem~(\ref{eq:problem_xz}) is non-empty.
\end{lemma}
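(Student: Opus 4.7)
The plan is to establish existence of a minimizer for the closed proper convex objective $F(\x,\z) := f(\x,\z) + \phi(\D\x)$ via a direct-method / Weierstrass argument: show $F$ is coercive, so sublevel sets are compact, then take a minimizing sequence and pass to the limit by lower semicontinuity. Closedness and convexity of $F$ come for free, since the hypotheses on $\A$, $\B$, $\C$ make the Hessian $\big[\begin{smallmatrix}\A&\B\\\B^T&\C\end{smallmatrix}\big]$ PSD (by the Schur-complement block factorization), so $f$ is a convex quadratic, while $\phi\circ\D$ is the composition of a closed proper convex function with a linear map.

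The first concrete step I would take is to eliminate the $\z$-dependence by completing the square. Since $\C$ is PD, for any fixed $\x$ the inner minimum in $\z$ is attained at $\z^{\star}(\x) := -\C^{-1}(\B^T \x + \f)$, yielding
\begin{equation*}
f(\x,\z) \;=\; \tfrac{1}{2}\|\z - \z^{\star}(\x)\|_{\C}^{2} \;+\; q(\x),
\end{equation*}
where $q(\x) = \tfrac{1}{2}\x^T(\A - \B\C^{-1}\B^T)\x + \x^T(\e - \B\C^{-1}\f) + \text{const}$ is a convex quadratic whose Hessian is precisely the Schur complement, PSD by hypothesis. This reduces the problem to the existence of a minimizer of the reduced objective $\x\mapsto q(\x)+\phi(\D\x)$, since the optimal $\z$ is then recovered as $\z^{\star}(\x)$.

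The heart of the proof is to show the reduced objective is coercive, from which boundedness of sublevel sets of $F$ follows directly: once $\|\x\|$ is bounded, $\z^{\star}(\x)$ is bounded (it is affine in $\x$), and the PDness of $\C$ bounds $\|\z-\z^{\star}(\x)\|$, hence $\|\z\|$. For coercivity in $\x$, I would work with recession functions. The coercivity of $\phi$ is equivalent to $\phi^{\infty}(\y) > 0$ for all $\y\neq \mathbf{0}$, and the full-column-rank hypothesis on $\D$ gives $\D\v\neq\mathbf{0}$ whenever $\v\neq\mathbf{0}$, so $(\phi\circ\D)^{\infty}(\v) = \phi^{\infty}(\D\v) > 0$ for every nonzero direction. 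Adding the recession of the PSD quadratic $q$ (which is $+\infty$ on directions outside $\ker(\A-\B\C^{-1}\B^T)$ and affine along that kernel), one obtains a strictly positive recession in every nonzero direction, i.e., coercivity of $q+\phi\circ\D$.

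With coercivity in hand, pick any $M>\inf F$: the sublevel set $\{(\x,\z):F(\x,\z)\leq M\}$ is nonempty, closed (by lsc), convex, and bounded by the argument above, hence compact. A minimizing sequence lies in this compact set, extract a convergent subsequence, and apply lsc of $F$ at the limit to conclude it is a minimizer. The step I expect to be the main obstacle is the recession-function calculation along $\ker(\A-\B\C^{-1}\B^T)$: there $q$ is merely affine, so one must rely on the strict positivity of $\phi^{\infty}(\D\cdot)$ to dominate any linear decrease. Making this domination rigorous (rather than waving at ``both scale linearly along the ray'') is the delicate part, and it is precisely where the full-column-rank assumption on $\D$ is essential, since without it a direction $\v$ with $\D\v=\mathbf{0}$ would be invisible to the regularizer and coercivity would fail.
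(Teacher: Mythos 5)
Your overall strategy is the same as the paper's in spirit, but far more explicit: the paper disposes of this lemma in a single line by invoking the Weierstrass-type existence result for coercive closed proper convex functions \cite[Proposition 11.14]{Bauschke2011}, and your direct-method argument (bounded, closed sublevel sets plus lower semicontinuity of $F$) is exactly that result unrolled. Your reduction via partial minimization over $\z$ and the Schur complement is also consistent with how the paper itself treats $\z$ elsewhere (cf.~\eqref{eq:dual_z_iter_resolved_boundaries}). So the skeleton is not in dispute; the substance of your proof is the coercivity verification, which the paper's citation leaves implicit.

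That is precisely where there is a genuine gap, and it is not merely ``delicate'': the step fails under the stated hypotheses. For a direction $\bm{\nu} \in \ker(\A - \B\C^{-1}\B^T)$ the recession function of your reduced objective is
\[
(q + \phi\circ\D)^{\infty}(\bm{\nu}) \;=\; \langle \e - \B\C^{-1}\f,\, \bm{\nu}\rangle \;+\; \phi^{\infty}(\D\bm{\nu}),
\]
and coercivity of $\phi$ together with injectivity of $\D$ only guarantees that the second term is a \emph{finite} positive number, while the first term can be any real number; nothing in the hypotheses prevents the sum from being negative, so the claim that one ``obtains a strictly positive recession in every nonzero direction'' is false in general. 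Concretely, take $\A=\mathbf 0$, $\B=\mathbf 0$, $\C=\I_d$, $\f=\mathbf 0$, $g=0$, $\D=\I_{k+d}$, $\phi=\|\cdot\|_1$ (closed, proper, convex, coercive), and $\e=-2\cdot\mathbf 1_{k+d}$: then $F(\x,\z)=\tfrac12\|\z\|^2+\|\x\|_1-2\,\mathbf 1_{k+d}^T\x\to-\infty$ along $\x=t\,\mathbf 1_{k+d}$, $\z=\mathbf 0$, so no minimizer exists even though every hypothesis of the lemma holds. To close the gap you need an additional assumption, e.g.\ that $f$ is bounded below, equivalently that $\e-\B\C^{-1}\f$ is orthogonal to $\ker(\A-\B\C^{-1}\B^T)$; with that in hand your argument does go through (for $\|\x\|\to\infty$ the regularizer dominates since $f\ge\inf f$, and for bounded $\x$ with $\|\z\|\to\infty$ the $\C$-quadratic in $\z$ dominates since $\phi$ is bounded below). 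This extra condition is satisfied in the only instance the paper actually uses, the nonnegative least-squares $f$ of Corollary~\ref{th:corollary}. Note that the paper's one-line citation tacitly assumes the composite objective is coercive and is therefore exposed to the same objection; your write-up at least makes the missing hypothesis visible, but as written it asserts a coercivity that does not follow from the stated assumptions.
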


\begin{proof}
	This result follows immediately from~\cite[Proposition 11.14]{Bauschke2011}.
\end{proof}

\begin{lemma} \label{lemma2}
	
	Under the assumptions of Lemma~\ref{lemma1}, let $\wt^i = [(\z^i)^T (\vs_x^i)^T (\dv_x^i)^T]^T$ denote the vector $\wt$ that results from the $i$th iteration of the partial ADMM~\eqref{eq:admm_aprox_1}--\eqref{eq:admm_aprox_4}, obtained with a given initialization of $\x$, $\z$, $\vs_x$, and $\dv_x$. Denote with an asterisk, as in $\bm{\omega}^* = [(\uu^*)^T (\vs^*)^T (\dv^*)^T]^T$, a KKT pair of problem~(\ref{eq:problem_xz}).\footnote{We consider problem \eqref{eq:problem_xz} reformulated as \eqref{eq:problem_u}, and say that a pair $(\uu$, $\pp)$ is a Karush-Kuhn-Tucker pair of that problem if $- \K^T\pp \in \partial \bar f(\uu)$ and $\pp \in \partial \psi (\K \uu)$. These are necessary and sufficient conditions for $\uu$ to be a solution of this problem~\cite[Proposition 26.11]{Bauschke2011}.}
	Let $\wt^* = \bm{\Gamma} \bm{\omega}^*$. Then, the following inequality holds:
	\begin{equation} \label{eq:lemma}
	\| \wt^i - \wt^{*} \|^2_{\G} - \| \wt^{i+1} - \wt^{*} \|^2_{\G} \geq \| \wt^{i+1} - \wt^i \|^2_{\G}.
	\end{equation}
\end{lemma}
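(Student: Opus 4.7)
The overall strategy is to establish a Fej\'er-monotonicity inequality for $\{\wt^i\}$ with respect to any KKT point $\wt^*$, in the $\G$-norm. I would start by applying the polarization identity to rewrite the target inequality $\| \wt^i - \wt^{*} \|^2_{\G} - \| \wt^{i+1} - \wt^{*} \|^2_{\G} \geq \| \wt^{i+1} - \wt^i \|^2_{\G}$ equivalently as $(\wt^{i+1} - \wt^i)^T \G (\wt^{i+1} - \wt^*) \leq 0$. This reduces the problem to controlling a single inner product, which, by the block-diagonal structure of $\G$, decomposes into three scalar cross-terms: one in $\C$ involving $\z$, one in $\mu \I_l$ involving $\vs_x$, and one in $\mu \I_l$ involving $\dv_x$.

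Next I would extract the first-order optimality conditions from \eqref{eq:admm_aprox_1}--\eqref{eq:admm_aprox_4} with $f$ in the quadratic form \eqref{eq:quadratic_f}. The $\z$-update gives $\B^T \x^{i+1} + \C \z^{i+1} + \f = \mathbf{0}$; subtracting the analogous KKT condition $\B^T \x^* + \C \z^* + \f = \mathbf{0}$ yields $\C(\z^{i+1} - \z^*) = -\B^T(\x^{i+1} - \x^*)$, which rewrites the $\z$ cross-term as a bilinear form in $\x$ only. Combining the $\vs_x$ and $\dv_x$ updates gives the subgradient inclusion $\mu \dv_x^{i+1} \in \partial \phi(\vs_x^{i+1})$, which, paired with the KKT inclusion $\mu \dv_x^* \in \partial \phi(\vs_x^*)$ (noting $\vs_x^* = \D \x^*$), yields, by monotonicity of $\partial \phi$, the sign inequality $\mu(\dv_x^{i+1} - \dv_x^*)^T(\vs_x^{i+1} - \vs_x^*) \geq 0$. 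Finally, subtracting the $\x$-optimality condition from the KKT stationarity in $\x$ produces a linear identity linking $\A(\x^{i+1}-\x^*)$, $\B(\z^i-\z^*)$, $\mu \D^T(\dv_x^{i+1}-\dv_x^*)$ and $\mu \D^T(\vs_x^{i+1}-\vs_x^i)$; left-multiplying by $(\x^{i+1}-\x^*)^T$ gives the single scalar identity that ties everything together.

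At that point I would combine the three cross-terms. The $\dv_x$ cross-term splits naturally via the Step-4 identity $\dv_x^{i+1} - \dv_x^i = \D(\x^{i+1}-\x^*) - (\vs_x^{i+1}-\vs_x^*)$, so that its $\D(\x^{i+1}-\x^*)$ part is precisely the quantity controlled by the scalar $\x$-optimality identity. Substituting yields, after cancellation, an expression of the form $-(\x^{i+1}-\x^*)^T(\A - \B\C^{-1}\B^T)(\x^{i+1}-\x^*)$, together with the monotonicity term with the correct sign. The first is $\leq 0$ by the PSD hypothesis on the Schur complement, and the monotonicity term is also $\leq 0$ after the sign flip, giving $(\wt^{i+1}-\wt^i)^T\G(\wt^{i+1}-\wt^*) \leq 0$ as desired.

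The main obstacle is the bookkeeping around the residual term involving $\B(\z^i-\z^*)$, since $\z^i$ is the \emph{previous} iterate. Eliminating it requires invoking the $\z$-optimality at iteration $i$, namely $\C(\z^i-\z^*) = -\B^T(\x^i-\x^*)$, which introduces an unwanted dependence on $\x^i$. The delicate part is showing that this $\x^i$-dependence cancels symmetrically against the corresponding contribution from the $\z$ cross-term (which, via $\C(\z^{i+1}-\z^*) = -\B^T(\x^{i+1}-\x^*)$ together with $\C(\z^{i+1}-\z^i) = -\B^T(\x^{i+1}-\x^i)$, also picks up a $\B\C^{-1}\B^T(\x^i-\x^*)$ piece). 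It is precisely this cancellation that produces the clean Schur-complement term $-(\x^{i+1}-\x^*)^T(\A - \B\C^{-1}\B^T)(\x^{i+1}-\x^*)$, making the PSD assumption on $\A - \B\C^{-1}\B^T$ (and not merely on $\A$) the essential ingredient of the argument.
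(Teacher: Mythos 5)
Your proposal is correct and follows essentially the same route as the paper's proof: reduce \eqref{eq:lemma} to $(\wt^{i+1}-\wt^i)^T\G(\wt^{i+1}-\wt^*)\le 0$, subtract the KKT conditions from the per-iteration optimality conditions, invoke monotonicity of $\partial\phi$ and the step-4 identity $\dv_x^{i+1}-\dv_x^i=\D(\x^{i+1}-\x^*)-(\vs_x^{i+1}-\vs_x^*)$, and let the cross-terms collapse onto $-(\x^{i+1}-\x^*)^T(\A-\B\C^{-1}\B^T)(\x^{i+1}-\x^*)\le 0$. The only divergence is your handling of the ``main obstacle'' $\B(\z^i-\z^*)$: the paper simply splits it as $\B(\z^i-\z^{i+1})+\B(\z^{i+1}-\z^*)$, pairing the first piece with the $\z$ cross-term via $(\z^{i+1}-\z^*)^T\C(\z^i-\z^{i+1})=-(\x^{i+1}-\x^*)^T\B(\z^i-\z^{i+1})$ and using only the $(i+1)$-th $\z$-optimality for the second, which avoids the identity $\C(\z^i-\z^*)=-\B^T(\x^i-\x^*)$ that your route needs (and which fails at $i=1$, where $\z^1$ is an arbitrary initialization), so the $\x^i$-bookkeeping you flag never arises.
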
 

\begin{proof}
	
	We will give a proof for the case in which only one block-Gauss-Seidel pass is used, per iteration, to compute $\x$ and $\z$ in the partial ADMM. The proof is easily extendable to more than one pass per iteration.
	
	The necessary and sufficient conditions for the existence of $\uu^*, \vs^*$ and $\dv^*$ are the following: primal feasibility,
	\begin{equation} \label{eq:primal}
	\vs^* = \K \uu^* \Leftrightarrow \vs_x^* = \D \x^*, \vs_z^* = \z^*,
	\end{equation}
	and dual feasibility,
	\begin{equation} \label{eq:dual_x}
	- \mu \D^{T} \dv_x^* = \nabla_{\x} f(\x^*, \z^*),
	\end{equation}
	\begin{equation} \label{eq:dual_z}
	- \mu \dv_z^* = \nabla_\z f(\x^*, \z^*),
	\end{equation}
	\begin{align}
	&\mu \dv_x^* \in \partial_{\vs_x} \psi(\vs^*) \nonumber \\
	\Leftrightarrow \quad &\mu \dv_x^* \in \partial_{\vs_x} \phi(\vs_x^*)  \label{eq:dual_v_x}
	\end{align}
	\begin{equation} \label{eq:dual_v_z}
	\dv_z^* = \mathbf{0}.
	\end{equation}
	Eqs.~\eqref{eq:dual_v_x} and ~(\ref{eq:dual_v_z}) use the fact that $\psi(\vs)$ doesn't depend on $\vs_z$, and the latter equation also uses the fact that $\mu \neq 0$.

	
	The solutions of the minimization problems of the partial ADMM obey, respectively,
	\begin{align}
	\nabla_{\x} f(\x^{i+1}, \z^i) &= \mu \D^T (\vs_x^i - \D \x^{i+1} - \dv_x^i) \nonumber \\
	&= \mu \D^T (\vs_x^i - \vs_x^{i+1}) - \mu \D^{T} \dv_x^{i+1}, \label{eq:dual_x_iter} \\
	\nabla_\z f(\x^{i+1}, \z^{i+1}) &= \mathbf 0,  \label{eq:dual_z_iter} \\
	\mathbf 0 \in \partial_{\vs_x} \phi(\vs_x^{i+1}) &+ \mu (\vs_x^{i+1} - \D \x^{i+1} - \dv_x^i) \nonumber \\
	\Leftrightarrow \quad \mu \dv_x^{i+1} \in \; &\partial_{\vs_x} \phi(\vs_x^{i+1}),  \label{eq:dual_v2_iter}
	\end{align}
	where we have used~(\ref{eq:admm_aprox_4}) in~(\ref{eq:dual_x_iter}) and~(\ref{eq:dual_v2_iter}). Since $\phi$ is convex, $\partial \phi$ is monotone, from Kachurovskii's theorem. Denote the vectors $\vs_x$ and $\dv_x$ from two different iterations of the partial ADMM by $\vs_x'$ and $\dv_x'$, and $\vs_x''$ and $\dv_x''$, respectively. Due to the monotonicity of $\partial \phi$ and to \eqref{eq:dual_v2_iter}, they will satisfy the condition
	\begin{equation} \label{eq:monotone1}
	(\vs_x' - \vs_x'')^T (\dv_x' - \dv_x'') \geq 0.
	\end{equation}
	Also due to that monotonicity, now together with \eqref{eq:dual_v2_iter} and \eqref{eq:dual_v_x}, we have
	\begin{equation} \label{eq:monotone2}
	(\vs_x' - \vs_x^*)^T (\dv_x' - \dv_x^*) \geq 0.
	\end{equation}
	
	Computing the gradients in Eqs.~\eqref{eq:dual_x} and \eqref{eq:dual_x_iter}, we obtain, respectively,
	\begin{equation} \label{eq:dual_x_resolved}
	\A \x^* + \B \z^* + \e + \mu \D^T \dv_x^* = 0,
	\end{equation}
	\begin{equation} \label{eq:dual_x_iter_resolved}
	\A \x^{i+1} + \B \z^i + \e + \mu \D^T \dv_x^{i+1} - \mu \D^T (\vs_x^i - \vs_x^{i+1}) = 0,
	\end{equation}
	and for equations \eqref{eq:dual_z} and \eqref{eq:dual_z_iter} we obtain, respectively,
	\begin{equation} \label{eq:dual_z_resolved}
	\B^T \x^* + \C \z^* + \f = 0 \Leftrightarrow \z^* = - \C^{-1} (\B^T \x^* + \f),
	\end{equation}	
	\begin{equation} \label{eq:dual_z_iter_resolved}
	\z^{i+1} = - \C^{-1} (\B^T \x^{i+1} + \f);
	\end{equation}
	in \eqref{eq:dual_z_resolved} we have used \eqref{eq:dual_v_z}.
	
	Subtracting~(\ref{eq:dual_x_resolved}) from~(\ref{eq:dual_x_iter_resolved}) yields
	\begin{equation} \label{eq:x_subtract}
	\begin{aligned}
	\A &(\x^{i+1} - \x^*) + \B (\z^i - \z^* ) \\
	& - \mu \D^T (\vs_x^i - \vs_x^{i+1}) + \mu \D^T (\dv_x^{i+1} - \dv_x^*)= 0.
	\end{aligned}
	\end{equation}
	
	Using the equality $\B (\z^i - \z^*) = \B (\z^i - \z^{i+1}) + \B (\z^{i+1} - \z^*)$ and Eqs.~\eqref{eq:dual_z_resolved} and \eqref{eq:dual_z_iter_resolved}, we can rewrite~\eqref{eq:x_subtract} as 
	\begin{equation} \label{eq:x_subtract2}
	\begin{aligned}
	( \A - &\B \C^{-1} \B^T ) (\x^{i+1} - \x^*) + \B (\z^i - \z^{i+1}) \\
	& - \mu \D^T (\vs_x^i - \vs_x^{i+1}) + \mu \D^T (\dv_x^{i+1} - \dv_x^*)= 0.
	\end{aligned}
	\end{equation} 
	
	The matrix $( \A - \B \C^{-1} \B^T )$ is PSD, as noted in Section~\ref{sec:partialadmm}. Due to this fact, multiplying both sides of \eqref{eq:x_subtract2}, on the left, by $(\x^{i+1} - \x^*)^T$, leads to
	\begin{equation} \label{eq:x_ineq}
	\begin{aligned}
	0 \leq &-(\x^{i+1} - \x^*)^T \B (\z^i - \z^{i+1}) \\
	& + \mu (\x^{i+1} - \x^*)^T \D^T (\vs_x^i - \vs_x^{i+1})\\
	& - \mu (\x^{i+1} - \x^*)^T \D^T (\dv_x^{i+1} - \dv_x^*).
	\end{aligned}
	\end{equation}
	Using Eqs.~\eqref{eq:admm_aprox_4} and~\eqref{eq:primal}, we can transform the last two members of the right-hand side of \eqref{eq:x_ineq} as follows:
	\begin{equation} \label{eq:lastmembers}
	\begin{aligned}
	&\mu (\x^{i+1} - \x^*)^T \D^T (\vs_x^i - \vs_x^{i+1}) \\
	& \qquad - \mu (\x^{i+1} - \x^*)^T \D^T (\dv_x^{i+1} - \dv_x^*) \\
	& \quad = \mu (\vs_x^i - \vs_x^{i+1})^T (\D\x^{i+1} - \D\x^*)\\
	& \qquad - \mu (\dv_x^{i+1} - \dv_x^*)^T (\D\x^{i+1} - \D\x^*) \\
	& \quad = \mu (\vs_x^i - \vs_x^{i+1})^T (\vs_x^{i+1} - \vs_x^*) \\ 
	& \qquad - \mu (\vs_x^i - \vs_x^{i+1})^T (\dv_x^i - \dv_x^{i+1}) \\
	& \qquad- \mu (\dv_x^{i+1} - \dv_x^*)^T (\vs_x^{i+1} - \vs_x^*) \\
	& \qquad + \mu (\dv_x^{i+1} - \dv_x^*)^T (\dv_x^i - \dv_x^{i+1}).
	\end{aligned}
	\end{equation}
	Subtracting~\eqref{eq:dual_z_resolved} from \eqref{eq:dual_z_iter_resolved} and multiplying both sides of the result, on the right, by $\C (\z^i - \z^{i+1})$, we obtain
	\begin{equation} \label{eq:zdiff}
	(\z^{i+1} - \z^*)^T \C (\z^i - \z^{i+1}) = -(\x^{i+1} - \x^*)^T \B (\z^i - \z^{i+1}).
	\end{equation}
	Equations \eqref{eq:lastmembers}, \eqref{eq:zdiff}, (\ref{eq:monotone1}) and (\ref{eq:monotone2}), applied to~(\ref{eq:x_ineq}), lead to
	\begin{equation} \label{eq:x_ineq2}
	\begin{aligned}
	0 & \leq (\z^{i+1} - \z^*)^T \C (\z^i - \z^{i+1}) \\		
	& \quad + \mu (\vs_x^{i+1} - \vs_x^*)^T (\vs_x^i - \vs_x^{i+1})\\
	& \quad + \mu (\dv_x^{i+1} - \dv_x^*)^T (\dv_x^i - \dv_x^{i+1})\\
	& = (\wt^{i+1} - \wt^*)^T \G (\wt^i - \wt^{i+1}). \\
	\end{aligned}
	\end{equation}
	$\G$ is PD, since it is a block-diagonal matrix composed of PD blocks. We have
	\begin{equation}
	\begin{aligned}
	\|\wt^{i+1} - \wt^*\|^2_{\G} &= \|(\wt^i - \wt^*) - (\wt^i - \wt^{i+1})\|^2_{\G}\\
	& = \|\wt^i - \wt^*\|^2_{\G} + \|\wt^i - \wt^{i+1}\|^2_{\G} \\
	& \quad - 2 (\wt^i - \wt^*)^T \G (\wt^i - \wt^{i+1}).
	\end{aligned}
	\end{equation}
	Reordering these terms, we obtain
	\begin{equation} \label{eq:reordering}
	\begin{aligned} 
	\|\wt^i - \wt^*&\|^2_{\G} - \|\wt^{i+1}- \wt^*\|^2_{\G} +  \|\wt^i - \wt^{i+1}\|^2_{\G} \\
	& = 2 (\wt^i - \wt^*)^T \G (\wt^i - \wt^{i+1}).
	\end{aligned}
	\end{equation}
	From~(\ref{eq:x_ineq2}),
	\begin{align} 
	&0 \leq (\wt^{i+1} - \wt^i + \wt^i - \wt^*)^T \G (\wt^i - \wt^{i+1}) \nonumber \\
	\Leftrightarrow \quad &0 \leq - \|\wt^{i+1} - \wt^i\|^2_{\G} \nonumber \\
	&\qquad + (\wt^i - \wt^*)^T \G (\wt^i - \wt^{i+1}) \nonumber \\
	\Leftrightarrow \quad &(\wt^i - \wt^*)^T \G (\wt^i - \wt^{i+1}) \geq  \|\wt^i - \wt^{i+1}\|^2_{\G}. \label{eq:ineq3}
	\end{align}
	Finally,~(\ref{eq:reordering}) and~(\ref{eq:ineq3}) lead to
	\begin{equation*}
	\begin{aligned}
	\|\wt^i - \wt^*\|^2_{\G} - \|\wt^{i+1} - \wt^*\|^2_{\G} \geq  \|\wt^i - \wt^{i+1}\|^2_{\G}.
	\qedhere
	\end{aligned}
	\end{equation*}
\end{proof}

\begin{lemma}  \label{lemma3}
	Under the assumptions of Lemma~\ref{lemma1},
	\begin{enumerate}
		\item The sequence $\{\| \wt^i - \wt^* \|^2_{\G}\}$ is convergent.	
		
		\item The sequence $\{\| \wt^i - \wt^{i+1} \|^2_{\G}\}$ is non-increasing and converges to $0$.
		
		\item The sequence $\{\wt^i\}$ has a convergent subsequence.
		
		\item Let $\bm{\omega}^i = \bm{\Pi} \wt^i$; the sequence $\{\bm{\omega}^i\}$ has a convergent subsequence.
	\end{enumerate}
\end{lemma}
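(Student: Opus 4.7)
The four items of Lemma~\ref{lemma3} are chained together: items~1 and~2 extract quantitative information from the Fej\'er-like inequality of Lemma~\ref{lemma2}, while items~3 and~4 are structural compactness consequences of item~1 together with elementary linear algebra. My plan is to treat them in this order and to identify the monotonicity claim in item~2 as the only nontrivial obstacle.

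For item~1, the plan is to observe that Lemma~\ref{lemma2} gives
\[
\| \wt^i - \wt^{*} \|^2_{\G} - \| \wt^{i+1} - \wt^{*} \|^2_{\G} \geq \| \wt^{i+1} - \wt^i \|^2_{\G} \geq 0,
\]
so the sequence $\{\| \wt^i - \wt^* \|^2_{\G}\}$ is nonincreasing; it is bounded below by $0$, hence convergent. For the ``converges to $0$'' half of item~2, I would sum the same inequality over $i=1,\ldots,N$: the left-hand side telescopes and is bounded by $\|\wt^1-\wt^*\|^2_{\G}$, so $\sum_{i\ge 1}\|\wt^{i+1}-\wt^i\|^2_{\G}<\infty$, forcing $\|\wt^{i+1}-\wt^i\|^2_{\G}\to 0$.

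The harder half of item~2 is the monotonicity of $\{\|\wt^{i+1}-\wt^i\|^2_{\G}\}$. This is where summability alone does not suffice; I would establish it by proving that the iteration operator $T:\wt^i\mapsto \wt^{i+1}$ induced by \eqref{eq:admm_aprox_1}--\eqref{eq:admm_aprox_4} is nonexpansive in the $\G$-norm, which then yields $\|\wt^{i+2}-\wt^{i+1}\|_{\G}=\|T(\wt^{i+1})-T(\wt^i)\|_{\G}\leq\|\wt^{i+1}-\wt^i\|_{\G}$. Nonexpansiveness can be obtained by rerunning the algebra of Lemma~\ref{lemma2}, but replacing the KKT pair $\wt^*$ with a second iterate $\tilde\wt^i$ produced by the same algorithm from a different initialization. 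The optimality conditions \eqref{eq:dual_x_iter}--\eqref{eq:dual_v2_iter} hold verbatim for both runs, the monotonicity-of-$\partial\phi$ inequalities \eqref{eq:monotone1}--\eqref{eq:monotone2} become symmetric in the two runs, and the positive-semidefiniteness of $\A-\B\C^{-1}\B^T$ together with the identity \eqref{eq:zdiff} go through unchanged; the same bookkeeping then yields $\|\wt^i-\tilde\wt^i\|^2_{\G}-\|T(\wt^i)-T(\tilde\wt^i)\|^2_{\G}\geq 0$. This adaptation is the main technical obstacle, but it is purely mechanical once one accepts that Lemma~\ref{lemma2} really uses only primal/dual feasibility-type relations that are shared by any two iterates.

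Item~3 follows from item~1 by compactness: since $\{\|\wt^i-\wt^*\|_{\G}\}$ is convergent it is bounded, and because $\G$ is block-diagonal with PD blocks (positive on $\C$ and equal to $\mu\I_l$ elsewhere), the $\G$-norm is equivalent to the Euclidean norm on the finite-dimensional space where $\wt^i$ lives. Hence $\{\wt^i\}$ is bounded in Euclidean norm, and Bolzano--Weierstrass produces a convergent subsequence $\wt^{i_k}\to \bar\wt$. For item~4, the map $\bm{\omega}\mapsto \bm{\Pi}\wt$ is a fixed linear (hence continuous) operator from $\mathbb{R}^{d+2l}$ into the ambient space of $\bm{\omega}$, so the same subsequence yields $\bm{\omega}^{i_k}=\bm{\Pi}\wt^{i_k}\to \bm{\Pi}\bar\wt$. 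No additional work beyond continuity of linear maps is needed here.
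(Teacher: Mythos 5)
Your handling of items 1, 3, and 4 and of the ``converges to $0$'' half of item 2 matches the paper's proof almost line for line: the paper likewise reads monotone decrease of $\{\|\wt^i-\wt^*\|^2_{\G}\}$ off inequality~\eqref{eq:lemma}, deduces convergence from boundedness below, obtains $\|\wt^i-\wt^{i+1}\|^2_{\G}\to 0$ by passing to the limit in~\eqref{eq:lemma} (your telescoping/summability variant is equivalent), and gets the two convergent subsequences from boundedness of $\{\wt^i\}$ plus the fact that $\bm{\omega}^i=\bm{\Pi}\wt^i$ is a fixed linear image. The one place you genuinely diverge is the \emph{non-increasing} claim in item 2: the paper's proof simply does not address it (it proves only the convergence to $0$, which is all that Lemma~\ref{lemma4} later uses), whereas you propose to establish it via $\G$-nonexpansiveness of the iteration map applied to the shifted trajectory $\tilde\wt^i=\wt^{i+1}$. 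That is the standard He--Yuan-style route and I believe it succeeds, but your claim that the algebra of Lemma~\ref{lemma2} goes through ``unchanged'' is optimistic: the derivation there exploits relations specific to a KKT pair, namely $\vs_x^*=\D\x^*$ and the absence of a $\mu\D^T(\vs_x^{*,i}-\vs_x^{*,i+1})$ term in~\eqref{eq:dual_x}, and for a second genuine trajectory both of these acquire extra difference terms ($\vs_x^{i+1}-\D\x^{i+1}=\dv_x^i-\dv_x^{i+1}$ rather than $\mathbf 0$), so the bookkeeping must be redone symmetrically rather than copied. So your proposal actually proves slightly more than the paper does; if you only need what Theorem~\ref{th:main} needs, the monotonicity argument can be dropped, and otherwise it deserves to be written out in full rather than asserted as mechanical.
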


\begin{proof}
	From inequality~(\ref{eq:lemma}), we can conclude that $\| \wt^{i+1} - \wt^{*} \|^2_{\G} \leq \| \wt^i - \wt^{*} \|^2_{\G}$. This means that the sequence $\{\| \wt^i - \wt^{*} \|^2_{\G}\}$ is non-increasing. Since its elements are non-negative, the sequence is bounded.	Since the sequence is non-increasing and bounded, it is convergent. Taking the limits of both sides of \eqref{eq:lemma}, we conclude that $\| \wt^i - \wt^{i+1} \|^2_{\G} \to 0$.

	Since the sequence $\{\| \wt^i - \wt^{*} \|^2_{\G}\}$ is bounded, $\{\wt^i\}$ is bounded as well. Therefore, it has a convergent subsequence. The same applies to $\{\bm{\omega}^i\}$.
\end{proof}

\begin{lemma} \label{lemma4}
	Under the assumptions of Lemma~\ref{lemma1}, the limit of any convergent subsequence of $\{\bm{\omega}^i\}$ is a KKT pair of problem~\eqref{eq:problem_xz}.
\end{lemma}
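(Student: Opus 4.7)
The plan is to take an arbitrary convergent subsequence $\{\bm{\omega}^{i_k}\}$ with limit $\bm{\omega}^\infty$ and verify the five necessary and sufficient KKT-type conditions \eqref{eq:primal}--\eqref{eq:dual_v_z} already assembled inside the proof of Lemma~\ref{lemma2}. Because $\bm{\omega}^i = \bm{\Pi}\wt^i$, convergence of $\{\bm{\omega}^{i_k}\}$ is equivalent to convergence of $\{\wt^{i_k}\}$; moreover, part~(2) of Lemma~\ref{lemma3} combined with the positive-definiteness of $\G$ implies $\wt^{i+1}-\wt^i\to\mathbf{0}$ in Euclidean norm, so $\wt^{i_k+1}\to\wt^\infty$ as well. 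Write $\z^\infty,\vs_x^\infty,\dv_x^\infty$ for the three blocks of $\wt^\infty$.

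First I would recover the $\x$-component, which is absent from $\wt$. The dual update \eqref{eq:admm_aprox_4} reads $\D\x^{i+1} = \vs_x^{i+1} - (\dv_x^i - \dv_x^{i+1})$; along the subsequence the right-hand side converges to $\vs_x^\infty$, and since $\D$ is full column rank this forces $\x^{i_k+1}\to(\D^T\D)^{-1}\D^T\vs_x^\infty$, a limit I shall call $\x^\infty$. This matches the $\x$-block defined by $\bm{\Pi}\wt^\infty$, keeping the two readings consistent, and gives $\vs_x^\infty = \D\x^\infty$, one half of primal feasibility \eqref{eq:primal}. The remaining identities $\vs_z^\infty = \z^\infty$ and $\dv_z^\infty = \mathbf{0}$ (needed for \eqref{eq:dual_v_z}) are built directly into $\bm{\Pi}$.

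Next I would pass to the limit in the two $\uu$-optimality conditions. Equation~\eqref{eq:dual_x_iter} reads $\nabla_\x f(\x^{i+1},\z^i) = \mu\D^T(\vs_x^i-\vs_x^{i+1}) - \mu\D^T\dv_x^{i+1}$; using $\z^{i_k}\to\z^\infty$ (from $\wt^{i_k+1}-\wt^{i_k}\to\mathbf{0}$) and $\x^{i_k+1}\to\x^\infty$ from the previous step, the affine left-hand side tends to $\nabla_\x f(\x^\infty,\z^\infty)$, while the term $\vs_x^i-\vs_x^{i+1}$ vanishes, yielding \eqref{eq:dual_x}. Analogously, \eqref{eq:dual_z_iter} passes in the limit to $\nabla_\z f(\x^\infty,\z^\infty)=\mathbf{0}$, which together with $\dv_z^\infty=\mathbf{0}$ delivers \eqref{eq:dual_z}.

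The hard part will be the fifth condition \eqref{eq:dual_v_x}, the subdifferential inclusion $\mu\dv_x^\infty\in\partial_{\vs_x}\phi(\vs_x^\infty)$, because unlike the others it is not the limit of a continuous equality but of a set-valued inclusion. The move I would make is to apply \eqref{eq:dual_v2_iter}, which states $\mu\dv_x^{i_k+1}\in\partial\phi(\vs_x^{i_k+1})$ for every $k$, and then appeal to the standard fact that the graph of $\partial\phi$ is a closed subset of $\mathbb{R}^l\times\mathbb{R}^l$ whenever $\phi$ is closed proper convex. Since $(\vs_x^{i_k+1},\mu\dv_x^{i_k+1})\to(\vs_x^\infty,\mu\dv_x^\infty)$, the closed-graph property transports the inclusion to the limit, completing the last KKT condition and therefore the proof.
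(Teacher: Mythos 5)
Your proposal is correct and follows essentially the same route as the paper's proof: both extract the convergence of $\x^{i_j}$ from the vanishing of $\vs_x^{i+1}-\D\x^{i+1}$ together with the full column rank of $\D$, pass to the limit in the iteration optimality conditions \eqref{eq:dual_x_iter}--\eqref{eq:dual_v2_iter}, handle the subdifferential inclusion via the closedness of the graph of $\partial\phi$ (the paper cites Rockafellar's Theorem 24.4 for this), and use the embedding $\bm{\Pi}$ to supply $\vs_z^{\infty}=\z^{\infty}$ and $\dv_z^{\infty}=\mathbf{0}$. No gaps.
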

	
\begin{proof}
	From Lemma~\ref{lemma3}, $\|\wt^{i+1} - \wt^i \|^2_{\G} \to 0$. This implies that $\| \z^i - \z^{i+1} \|^2_\C \to 0$, $\| \vs_x^i - \vs_x^{i+1} \|^2 \to 0$, and $\| \dv_x^i - \dv_x^{i+1} \|^2 \to 0$. From the latter and from (\ref{eq:admm_aprox_4}), we have
	\begin{equation} \label{eq:lemma4}
	\vs_x^{i+1} - \D \x^{i+1} \to 0.
	\end{equation}
	From Lemma~\ref{lemma3}, $\{\wt^i\}$ has a convergent subsequence. Consider one such subsequence, $\{\wt^{i_j}\}$, whose limit we designate by $\wt^{\infty}$. Denote by $\z^{\infty}$, $\vs_x^{\infty}$, and $\dv_x^{\infty}$, respectively, the limits of the corresponding subsequences $\{\z^{i_j}\}$, $\{\vs_x^{i_j}\}$, and $\{\dv_x^{i_j}\}$. Additionally, designate by $\bm{\omega}^{\infty}$ the limit of the subsequence $\{\bm{\omega}^{i_j}\} = \{\bm{\Pi} \wt^{i_j}\}$. Note that $\z^{i_j+1} \to \z^\infty$ because, according to Lemma 2, $\|\z^i - \z^{i+1}\|_\C \to \mathbf 0$; for similar reasons, $\vs_x^{i_j+1} \to \vs_x^\infty$ and $\dv_x^{i_j+1} \to \dv_x^\infty$. Since $\{\vs_x^{i_j}\}$ converges and $\D$ is full column rank, \eqref{eq:lemma4} shows that $\{\x^{i_j}\}$ also converges. Denote its limit by $\x^\infty$. Taking the limits of both sides of \eqref{eq:lemma4} over the subsequence of indexes $\{i_j\}$, we obtain
	\begin{equation} \label{eq:primal_infty}
	\vs_x^{\infty} = \D \x^{\infty}.
	\end{equation}
	
	Taking the limits on both sides of Eqs. \eqref{eq:dual_x_iter}--\eqref{eq:dual_v2_iter}, again over the subsequence of indexes $\{i_j\}$, and applying~\cite[Theorem 24.4]{Rockafellar1970}, we have
	\begin{equation}
	- \mu \D^{T} \dv_x^{\infty} = \nabla_{\x} f(\x^{\infty}, \z^{\infty}),
	\end{equation}
	\begin{equation}
	\mathbf 0 = \nabla_\z f(\x^{\infty}, \z^{\infty}),
	\end{equation}
	\begin{equation} \label{eq:dx_inf}
	\mu \dv_x^{\infty} \in \partial_{\vs_x} \psi(\vs_x^{\infty}).
	\end{equation}
	
	Variables $\vs_z^i$ and $\dv_z^i$ are not used in the partial ADMM, and therefore we can give them any values that are convenient. By making $\bm{\omega}^i = \bm{\Pi} \wt^i$, we are setting, for all $i$, $\vs_z^i = \z^i$ and $\dv_z^i = \mathbf 0$. This implies that $\vs_z^{\infty} = \z^{\infty}$ and $\dv_z^{\infty} = \mathbf 0$. These two equalities, together with~\eqref{eq:primal_infty}--\eqref{eq:dx_inf}, correspond to the optimality conditions of problem~(\ref{eq:problem_xz}), i.e., $\bm{\omega}^{\infty}$ is a KKT pair of this problem.
\end{proof}

\begin{lemma} \label{lemma5}
	Under the assumptions of Lemma \ref{lemma1}, the sequence $\{ \bm{\omega}^i \}$ is convergent.
\end{lemma}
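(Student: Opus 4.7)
The plan is to leverage the monotonicity result of Lemma 2 together with the subsequence convergence established in Lemmas 3 and 4, using a standard Opial-style argument: any convergent subsequence lands on a KKT pair $\bm{\omega}^\infty$, and Lemma 2 applied at that particular limit pins down the full sequence.

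First I would invoke Lemma 3 to extract a subsequence $\{\wt^{i_j}\}$ converging to some $\wt^\infty$, and set $\bm{\omega}^\infty = \bm{\Pi}\wt^\infty$. By Lemma 4, $\bm{\omega}^\infty$ is a KKT pair of problem~\eqref{eq:problem_xz}, and from the proof of Lemma 4 we have $\vs_x^\infty = \D\x^\infty$, $\vs_z^\infty = \z^\infty$, and $\dv_z^\infty = \mathbf 0$, so that $\bm{\Gamma}\bm{\omega}^\infty = \wt^\infty$.

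Next I would reapply Lemma 2 with this particular KKT pair $\bm{\omega}^\infty$ in place of the generic $\bm{\omega}^*$. The inequality~\eqref{eq:lemma} then yields that $\{\|\wt^i - \wt^\infty\|^2_{\G}\}$ is non-increasing, hence convergent. Along the subsequence $\{i_j\}$ we have $\|\wt^{i_j} - \wt^\infty\|^2_{\G} \to 0$, so the entire monotone sequence $\{\|\wt^i - \wt^\infty\|^2_{\G}\}$ converges to $0$. Since $\G$ is block-diagonal with positive-definite blocks, the $\G$-norm is equivalent to the Euclidean norm, hence $\wt^i \to \wt^\infty$.

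Finally, to transfer convergence from $\wt^i$ to $\bm{\omega}^i$ I would use the fact that, by construction, $\bm{\omega}^i = \bm{\Pi}\wt^i$ for every $i$, and $\bm{\Pi}$ is a fixed linear (therefore continuous) operator; consequently $\bm{\omega}^i = \bm{\Pi}\wt^i \to \bm{\Pi}\wt^\infty = \bm{\omega}^\infty$. In particular, since $\bm{\Pi}$ recovers $\x$ from $\vs_x$ via $(\D^T\D)^{-1}\D^T$ (well-defined because $\D$ is full column rank), the implicit $\x$-component also converges. The main obstacle here is essentially notational: ensuring that the limit pair $\bm{\omega}^\infty$ produced by Lemma 4 really coincides with $\bm{\Pi}\bm{\Gamma}\bm{\omega}^\infty$, so that Lemma 2 can be invoked at this specific point and not merely at some generic KKT pair; once that identification is in place, the remaining steps reduce to the classical argument that a monotone sequence with a cluster point at $0$ converges to $0$.
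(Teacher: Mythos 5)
Your proof is correct, but it follows a genuinely different route from the paper's. You use the classical Fej\'er-monotonicity (Opial-type) argument anchored at a \emph{single} cluster point: pick one convergent subsequence, certify via Lemma~\ref{lemma4} that its limit $\bm{\omega}^{\infty}$ is a KKT pair, check the identification $\bm{\Gamma}\bm{\omega}^{\infty}=\bm{\Gamma}\bm{\Pi}\wt^{\infty}=\wt^{\infty}$ (which holds since $\bm{\Gamma}\bm{\Pi}=\I$), and then re-invoke inequality~\eqref{eq:lemma} of Lemma~\ref{lemma2} at that specific KKT pair so that the monotone sequence $\{\|\wt^{i}-\wt^{\infty}\|^{2}_{\G}\}$, having a subsequence tending to $0$, must itself tend to $0$; convergence of $\{\bm{\omega}^{i}\}$ then follows from continuity of $\bm{\Pi}$ and positive-definiteness of $\G$. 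The paper instead takes \emph{two} arbitrary convergent subsequences with limits $\bm{\omega}^{\infty}_{1},\bm{\omega}^{\infty}_{2}$, uses Lemma~\ref{lemma3} to assert that both limits $a_{p}=\lim_{i}\|\bm{\omega}^{i}-\bm{\omega}^{\infty}_{p}\|_{\G}$ exist, and derives from the polarization identity evaluated along each subsequence that $a_{1}^{2}-a_{2}^{2}$ equals both $-\|\bm{\omega}^{\infty}_{1}-\bm{\omega}^{\infty}_{2}\|^{2}_{\G}$ and $+\|\bm{\omega}^{\infty}_{1}-\bm{\omega}^{\infty}_{2}\|^{2}_{\G}$, forcing the two limits to coincide. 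Both arguments rest on exactly the same ingredients (Lemmas~\ref{lemma2}--\ref{lemma4}); yours is slightly more economical in that it needs the Fej\'er inequality at only one KKT pair and it directly identifies the limit of the whole sequence as the cluster point produced by Lemma~\ref{lemma4}, whereas the paper's version shows uniqueness of subsequential limits and then concludes convergence from boundedness. Your closing caveat about the identification $\wt^{\infty}=\bm{\Gamma}\bm{\omega}^{\infty}$ is exactly the right point to verify, and it does hold, so there is no gap.
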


\begin{proof}
	We follow the argument made in, e.g.,~\cite{Rockafellar1976, Chen1994}. Consider the limits of two convergent subsequences of $\{\bm{\omega}^i\}$, $\bm{\omega}^{\infty}_1$ and $\bm{\omega}^{\infty}_2$. According to Lemma~\ref{lemma4}, $\bm{\omega}^{\infty}_1$ and $\bm{\omega}^{\infty}_2$ are both KKT pairs of problem~(\ref{eq:problem_u}). Therefore, according to Lemma \ref{lemma3}, the following two limits exist:
	\begin{equation}
	\lim_{i \to \infty} \| \bm{\omega}^i - \bm{\omega}^{\infty}_p \|_{\G} = a_p, \quad p = 1,2.
	\end{equation}
	We have
	\begin{align*}
	\| \bm{\omega}^i& - \bm{\omega}^{\infty}_1 \|^2_{\G} - \| \bm{\omega}^i - \bm{\omega}^{\infty}_2 \|^2_{\G} \\
	& = -2 (\bm{\omega}^i)^T(\bm{\omega}^{\infty}_1 - \bm{\omega}^{\infty}_2) + \| \bm{\omega}^{\infty}_1 \|^2_{\G}  - \| \bm{\omega}^{\infty}_2 \|^2_{\G}.
	\end{align*}
	Taking the limits of both sides over a subsequence of $\{\bm{\omega}^i\}$ that converges to $\bm{\omega}^{\infty}_1$, we have 
	\begin{align*}
	a_1^2 - a_2^2 & = -2 (\bm{\omega}^{\infty}_1)^T(\bm{\omega}^{\infty}_1 - \bm{\omega}^{\infty}_2) + \| \bm{\omega}^{\infty}_1 \|^2_{\G}  - \| \bm{\omega}^{\infty}_2 \|^2_{\G} \\
	& = - \| \bm{\omega}^{\infty}_1 - \bm{\omega}^{\infty}_2 \|^2_{\G},
	\end{align*}
	and taking the limits over a subsequence that converges to $\bm{\omega}^{\infty}_2$, we have
	\begin{align*}
	a_1^2 - a_2^2 & = -2 (\bm{\omega}^{\infty}_2)^T(\bm{\omega}^{\infty}_1 - \bm{\omega}^{\infty}_2) + \| \bm{\omega}^{\infty}_1 \|^2_{\G}  - \| \bm{\omega}^{\infty}_2 \|^2_{\G} \\
	& = \| \bm{\omega}^{\infty}_1 - \bm{\omega}^{\infty}_2 \|^2_{\G}.
	\end{align*}
	These two equalities imply that $\| \bm{\omega}^{\infty}_1 - \bm{\omega}^{\infty}_2 \|^2_{\G} = 0$. This means that $\bm{\omega}^{\infty}_1=\bm{\omega}^{\infty}_2$. Therefore, all convergent subsequences of $\{\bm{\omega}^i\}$ converge to the same limit, and as a consequence $\{\bm{\omega}^i\}$ is convergent.
\end{proof}

After these preliminary results, we prove Theorem \ref{th:main}.

%

\begin{proof}[Proof of Theorem 1]
	With problem~\eqref{eq:problem_xz} reformulated as~\eqref{eq:problem_u}, we are under the conditions of Lemmas~\ref{lemma1}--\ref{lemma5}. Lemma~\ref{lemma1} shows that the set of solutions of the problem is non-empty. Lemma~\ref{lemma5} allows us to conclude that the sequence $\{\bm{\omega}^i\}$ converges. Lemma~\ref{lemma4} shows that its limit corresponds to a KKT pair. This implies that $\big\{\big[\begin{smallmatrix} \x^i \\ \z^i \end{smallmatrix}\big]\big\}$ converges to a solution of problem~\eqref{eq:problem_xz}, $\big[\begin{smallmatrix} \x^* \\ \z^* \end{smallmatrix}\big]$, and that $\{\vs^i\}$ converges to $\K \big[\begin{smallmatrix} \x^* \\ \z^* \end{smallmatrix}\big]$.
\end{proof}

\section{Additional experimental results}
\label{sec:app_exp}

This Appendix presents the computing times for some of the experiments described in \mbox{Section~\ref{sec:deblurring}}.

\begin{table}[h]
	\renewcommand{\arraystretch}{1}
	\caption{Results for \textit{cameraman} with boxcar blurs of various sizes.}
	\centering
\begin{tabular}{l|c|c|c|c}
Method &Blur size &$\kappa \times 10^3$ &Iterations &Time (s) \\
\hline
Proposed-1 & \multirow{5}{*}{5} & \multirow{5}{*}{57.0} &   74 & 0.796  \\ 
Proposed-AD & & &   41 & 0.600 \\ 
AM & & &  179 & 1.977 \\ 
ADMM-CG & & &   25 & 20.416 \\ 
CM & & & 12028 & 52.405 \\ 
\hline 
Proposed-1 & \multirow{5}{*}{13} & \multirow{5}{*}{492.4} &  457 & 4.030  \\ 
Proposed-AD & & &  133 & 3.072 \\ 
AM & & &  401 & 4.376 \\ 
ADMM-CG & & &   49 & 92.345 \\ 
CM & & & 58623 & 259.226 \\ 
\hline 
Proposed-1 & \multirow{5}{*}{21} & \multirow{5}{*}{1356.5} &  410 & 3.628  \\ 
Proposed-AD & & &   95 & 3.244 \\ 
AM & & &  492 & 5.336 \\ 
ADMM-CG & & &   65 & 142.542 \\ 
CM & & & 122902 & 535.551 \\ 
\hline 
\end{tabular}
\end{table}

\begin{table}[h]
	\renewcommand{\arraystretch}{1}
	\caption{Results for \textit{Lena} with boxcar blurs of various sizes.}
	\centering
\begin{tabular}{l|c|c|c|c}
Method &Blur size &$\kappa \times 10^3$ &Iterations &Time (s) \\
\hline
Proposed-1 & \multirow{5}{*}{5} & \multirow{5}{*}{57.0} &  131 & 1.256  \\ 
Proposed-AD & & &   52 & 0.721 \\ 
AM & & &  264 & 3.032 \\ 
ADMM-CG & & &   25 & 21.188 \\ 
CM & & & 11353 & 49.062 \\ 
\hline 
Proposed-1 & \multirow{5}{*}{13} & \multirow{5}{*}{492.4} & 1038 & 8.908  \\ 
Proposed-AD & & &  140 & 3.192 \\ 
AM & & &  399 & 4.390 \\ 
ADMM-CG & & &   49 & 86.374 \\ 
CM & & & 51328 & 223.575 \\ 
\hline 
Proposed-1 & \multirow{5}{*}{21} & \multirow{5}{*}{1356.5} & 2202 & 18.713  \\ 
Proposed-AD & & &  176 & 5.761 \\ 
AM & & &  552 & 6.038 \\ 
ADMM-CG & & &   65 & 138.104 \\ 
CM & & & 118684 & 517.376 \\ 
\hline 
\end{tabular}
\end{table}

\begin{table}[h]
	\renewcommand{\arraystretch}{1}
	\caption{Results for \textit{Lena} with Gaussian blurs of various sizes.}
	\centering
\begin{tabular}{l|c|c|c|c}
Method &Blur size &$\kappa \times 10^3$ &Iterations &Time (s) \\
\hline
Proposed-1 & \multirow{5}{*}{5} & \multirow{5}{*}{247.0} &  196 & 1.828  \\ 
Proposed-AD & & &   73 & 0.958 \\ 
AM & & &  331 & 3.737 \\ 
ADMM-CG & & &   35 & 26.204 \\ 
CM & & & 13946 & 61.351 \\ 
\hline 
Proposed-1 & \multirow{5}{*}{13} & \multirow{5}{*}{21590.6} & 6093 & 52.058  \\ 
Proposed-AD & & &  776 & 16.905 \\ 
AM & & & 1032 & 11.344 \\ 
ADMM-CG & & &   81 & 120.265 \\ 
CM & & & 196053 & 867.026 \\ 
\hline 
Proposed-1 & \multirow{5}{*}{21} & \multirow{5}{*}{686384.8} & 68776 & 583.065  \\ 
Proposed-AD & & & 7047 & 233.340 \\ 
AM & & & 1454 & 15.863 \\ 
ADMM-CG & & &  205 & 345.285 \\ 
CM & & & 412249 & 1833.467 \\ 
\hline 
\end{tabular}
\end{table}



\ifCLASSOPTIONcaptionsoff
  \newpage
\fi



\bibliographystyle{IEEEtran}
\bibliography{IEEEabrv,refs}
%

%


\begin{IEEEbiography}[{\includegraphics[width=1in,height=1.25in,clip,keepaspectratio]{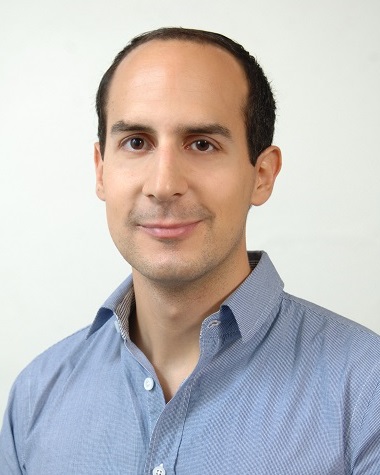}}]{Miguel Sim\~{o}es}
	received the M.Sc. degree in electrical and computer engineering from the Instituto Superior T\'{e}cnico, University of Lisbon, Lisbon, Portugal, in 2010. He is currently working toward the joint Ph.D. degree in electrical and computer engineering, and signal and image processing at the Instituto de Telecomunica\c{c}\~{o}es, Instituto Superior T\'{e}cnico, University of Lisbon, Lisbon, and at the Grenoble Images Parole Signal Automatique (GIPSA-lab), University of Grenoble, Grenoble, France, respectively.
	
	He has previousl yworked as an Information Technology Consultant in the field of telecommunications. His main areas of research interest are image processing, optimization, and remote sensing.
\end{IEEEbiography}

\begin{IEEEbiography}[{\includegraphics[width=1in,height=1.25in,clip,keepaspectratio]{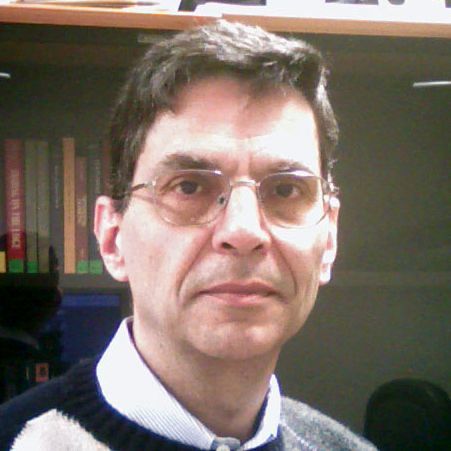}}]{Lu\'is B. Almeida} 
	received the “Doutor” degree from the Techincal University of Lisbon, Portugal, in 1983. Since 1972, he has been a faculty member of Instituto Superior T\'{e}cnico (IST), Lisbon University, where he has been a full professor since 1995, lecturing in the areas of signal processing and machine learning. From 1984 to 2004, he was head of the Neural Networks and Signal Processing Group of INESC-ID. From 2000 to 2003, he was chair of INESC-ID. In 2005, he joined the Instituto de Telecomunica\c{c}\~{o}es (Telecommunications Institute). From 2008 to 2010, he was chair of the Electrical and Computer Engineering Department of IST. Over the years, he has worked on speech modelling and coding, time-frequency representations of signals and the fractional Fourier transform, learning algorithms for neural networks, blind source separation, and, currently, image processing. He was the recipient of an IEEE Signal Processing Area ASSP Senior Award and of several national awards.
\end{IEEEbiography}

\begin{IEEEbiography}[{\includegraphics[width=1in,height=1.25in,clip,keepaspectratio]{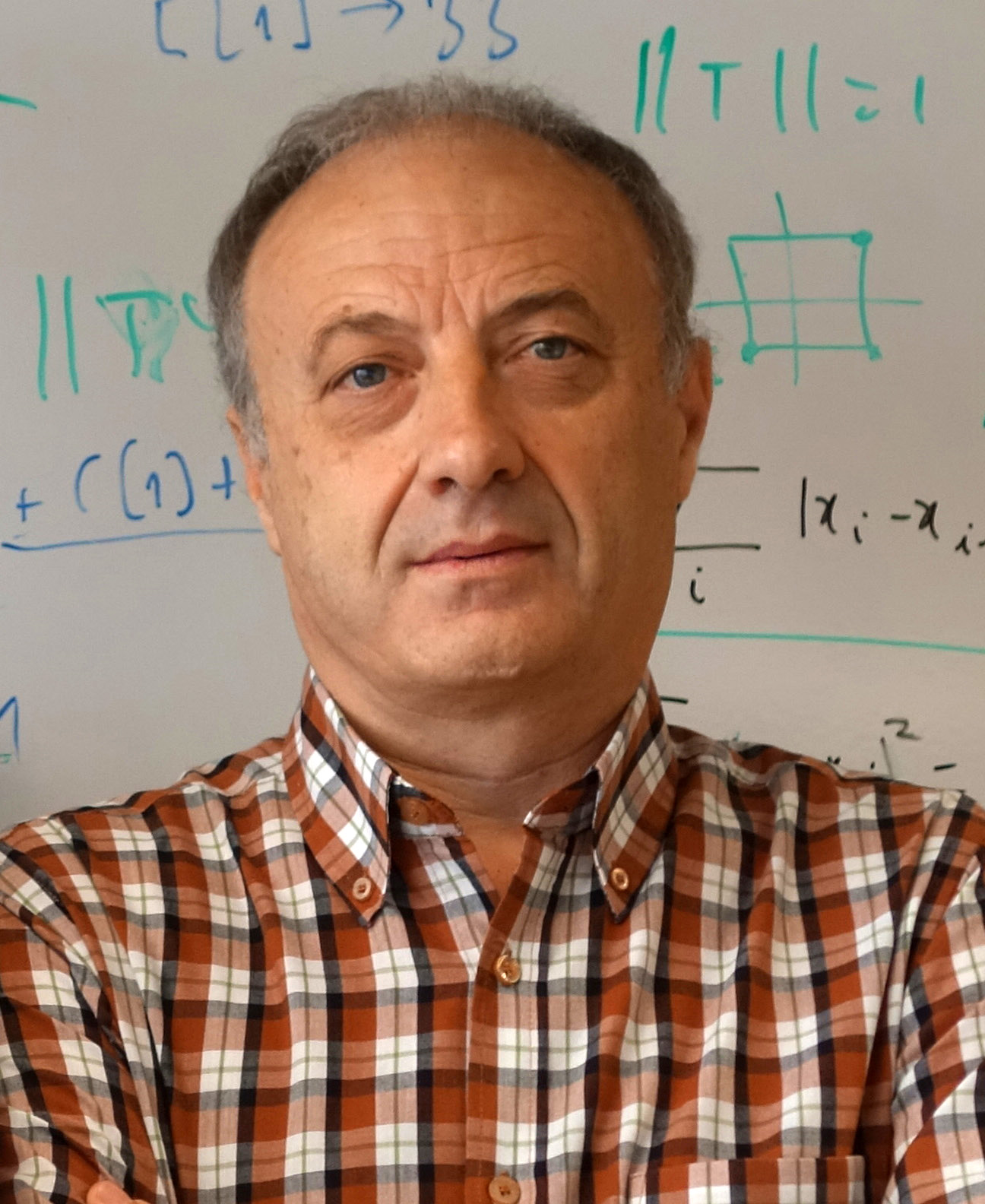}}]{Jos\'e Bioucas-Dias}
	(SM'15) received the EE, MSc, PhD, and ``Agregado" degrees from Instituto Superior T\'ecnico (IST), Technical University of Lisbon (TULisbon, now University of Lisbon), Portugal, in 1985, 1991, 1995, and 2007, respectively, all in electrical and computer engineering.
	
	Since 1995, Jos\'e Bioucas-Dias has been with the Department of Electrical and Computer Engineering, IST, where he was an Assistant Professor from 1995 to 2007 and an Associate Professor since 2007. Since 1993, he is also a Senior Researcher with the Pattern and Image Analysis group of the Instituto de Telecomunica\c{c}\~oes, which is a private  non-profit research institution. His  research interests include inverse problems, signal and image processing, pattern recognition, optimization, and remote sensing. His research work has been highly cited and he is included in Thomson Reuters' Highly Cited Researchers 2015 list. 
	
	Jos\'e Bioucas-Dias was an Associate Editor for the {\sc IEEE Transactions on Circuits and Systems} (1997-2000) and  {\sc IEEE Transactions on Image Processing} (2010-2014) and he is  a Senior Area  Editor for the {\sc IEEE Transactions on Image Processing} and an Associate Editor for the {\sc IEEE Transactions on Geoscience and Remote Sensing}.  He was the General Co-Chair of the 3rd IEEE GRSS Workshop on Hyperspectral Image and Signal Processing, Evolution in Remote sensing (WHISPERS'2011) and has been a member of program/technical committees of several international conferences. 
\end{IEEEbiography}

\begin{IEEEbiography}[{\includegraphics[width=1in,height=1.25in,clip,keepaspectratio]{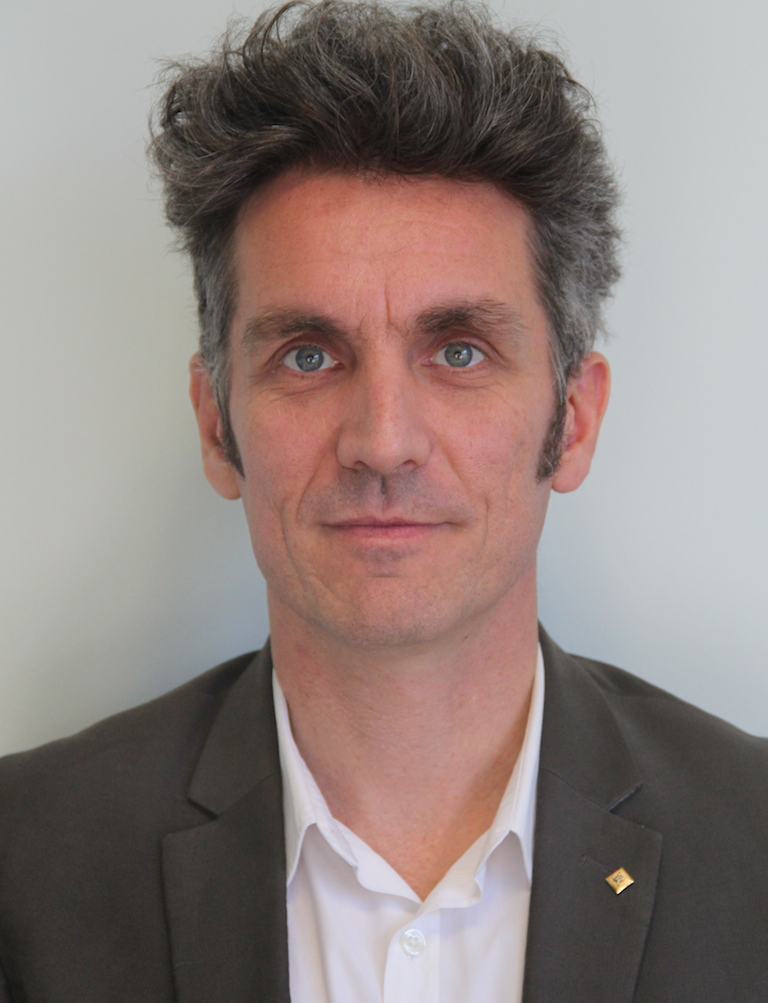}}]{Jocelyn Chanussot} 
	(M’04–SM’04–F’12) received the M.Sc. degree in electrical engineering from the Grenoble Institute of Technology (Grenoble INP), Grenoble, France, in 1995, and the Ph.D. degree from the Université de Savoie, Annecy, France, in 1998. In 1999, he was with the Geography Imagery Perception Laboratory for the Delegation Generale de l'Armement (DGA - French National Defense Department). Since 1999, he has been with Grenoble INP, where he was an Assistant Professor from 1999 to 2005, an Associate Professor from 2005 to 2007, and is currently a Professor of signal and image processing. He is conducting his research at the Grenoble Images Speech Signals and Automatics Laboratory (GIPSA-Lab). His research interests include image analysis, multicomponent image processing, nonlinear filtering, and data fusion in remote sensing. He has been a visiting scholar at Stanford University (USA), KTH (Sweden) and NUS (Singapore). Since 2013, he is an Adjunct Professor of the University of Iceland. In 2015-2017, he is a visiting professor at the University of California, Los Angeles (UCLA).  
	
	Dr. Chanussot is the founding President of IEEE Geoscience and Remote Sensing French chapter (2007-2010) which received the 2010 IEEE GRS-S Chapter Excellence Award. He was the co-recipient of the NORSIG 2006 Best Student Paper Award, the IEEE GRSS 2011 and 2015 Symposium Best Paper Award, the IEEE GRSS 2012 Transactions Prize Paper Award and the IEEE GRSS 2013 Highest Impact Paper Award. He was a member of the IEEE Geoscience and Remote Sensing Society AdCom (2009-2010), in charge of membership development. He was the General Chair of the first IEEE GRSS Workshop on Hyperspectral Image and Signal Processing, Evolution in Remote sensing (WHISPERS). He was the Chair (2009-2011) and  Cochair of the GRS Data Fusion Technical Committee (2005-2008). He was a member of the Machine Learning for Signal Processing Technical Committee of the IEEE Signal Processing Society (2006-2008) and the Program Chair of the IEEE International Workshop on Machine Learning for Signal Processing, (2009). He was an Associate Editor for the IEEE Geoscience and Remote Sensing Letters (2005-2007) and for Pattern Recognition (2006-2008). Since 2007, he is an Associate Editor for the IEEE Transactions on Geoscience and Remote Sensing. He was the Editor-in-Chief of the IEEE Journal of Selected Topics in Applied Earth Observations and Remote Sensing (2011-2015). In 2013, he was a Guest Editor for the Proceedings of the IEEE and in 2014 a Guest Editor for the IEEE Signal Processing Magazine. He is a Fellow of the IEEE and a member of the Institut Universitaire de France (2012-2017).	
\end{IEEEbiography}

%
%
%





\end{document}